\newcommand{\cC}{{\cal C}}
\newcommand{\cQ}{{\cal Q}}
\newcommand{\cG}{{\cal G}}
\def\eqref#1{equation~\ref{#1}}
\def\1{\bm{1}}
\DeclareMathAlphabet{\mathsfit}{\encodingdefault}{\sfdefault}{m}{sl}
\SetMathAlphabet{\mathsfit}{bold}{\encodingdefault}{\sfdefault}{bx}{n}
\newcommand{\R}{\mathbb{R}}
\newcommand{\N}{\mathbb{N}} 
\newcommand{\Exp}[1]{{\rm E} \left[ #1 \right]}  
\newcommand{\eqdef}{\coloneqq}
\theoremstyle{plain}
\newtheorem{assumption}{Assumption}
\newtheorem{theorem}{Theorem}
\numberwithin{lemma}{section}
\theoremstyle{definition}
\numberwithin{definition}{section}
\renewcommand{\paragraph}[1]{\vspace{0.1em}\noindent\textbf{#1}}
\newcommand{\idlow}[1]{\mathord{\mathcode`\-="702D\it #1\mathcode`\-="2200}}
\newcommand{\id}[1]{\ensuremath{\idlow{#1}}}
\newcommand{\bits}{b}
\newcommand{\round}{\operatorname{rnd}}
\newcommand{\zeros}{z}
\newcommand{\scale}{s}
\title{Pushing the Limits of Large Language Model Quantization \\ via the Linearity Theorem}
\author{%
  Vladimir Malinovskii$^\dagger$ \\
  Yandex, HSE University \\
  \And
  Andrei Panferov$^\dagger$ \\
  ISTA$^\triangledown$ \\
  \And
  Ivan Ilin \\
  GenAI CoE, KAUST$^*$ \\
  \AND
  Han Guo \\
  MIT$^\diamond$ \\
  \And
  Peter Richtárik \\
  GenAI CoE, KAUST$^*$ \\
  \And
  Dan Alistarh \\
  ISTA$^\triangledown$, NeuralMagic \\
}
\begin{document}

\maketitle

\def\thefootnote{$\dagger$}\footnotetext{Equal contribution}
\def\thefootnote{$\diamond$}\footnotetext{ Massachusetts Institute of Technology} \def\thefootnote{$\triangledown$}\footnotetext{Institute of Science and Technology Austria} \def\thefootnote{\arabic{footnote}}
\def\thefootnote{*}\footnotetext{
King Abdullah University of Science and Technology, Saudi Arabia}\def\thefootnote{\arabic{footnote}}

\begin{abstract}
Quantizing large language models has become a standard way to reduce their memory and computational costs. Typically, existing methods  focus on breaking down the problem into individual layer-wise sub-problems, and minimizing per-layer error, measured via various metrics. Yet,  this approach currently lacks theoretical justification and the metrics employed may be sub-optimal. 
In this paper, we present a ``linearity theorem'' establishing a direct relationship between the layer-wise $\ell_2$ reconstruction error and the model perplexity increase due to quantization. This insight enables two novel applications: (1) a simple \emph{data-free LLM quantization method using Hadamard rotations and MSE-optimal grids}, dubbed HIGGS, which outperforms all prior data-free approaches such as the extremely popular NF4 quantized format, and 
(2) an \emph{optimal} solution to the problem of finding non-uniform per-layer quantization levels which match a given compression constraint in the medium-bitwidth regime, obtained by reduction to dynamic programming. On the practical side, we demonstrate improved accuracy-compression trade-offs on Llama-3.1 and 3.2-family models, as well as on Qwen-family models. Further, we show that our method can be  efficiently supported in terms of GPU kernels at various batch sizes, 
advancing both data-free and non-uniform quantization for LLMs.
\end{abstract}

\section{Introduction}

Quantization has become a standard technique for reducing the memory costs of large language models (LLMs), e.g.~\citep{dettmers2022llm, dettmers2022case, frantar2022gptq, lin2023awq, chee2023quip, tseng2024quipbetterllmquantization, van2024gptvq}. 
Most existing high-performance approaches start from the natural strategy of quantizing layers one-at-a-time, while minimizing a given per-layer error function, such as per-layer quantization entropy~\citep{dettmers2023qloraefficientfinetuningquantized} or $\ell_1$-norm error~\citep{yoshida2023nf4isntinformationtheoretically}. 

In this context, a reasonable question regards the relationship between \emph{the individual per-layer quantization error}, measured in terms of, e.g., output MSE, and \emph{the model's output error}, measured in terms of, e.g., validation perplexity (PPL). While previous work observes that various layers can have vastly different ``sensitivities'' towards the model's output~\citep{owl, frantar2022spdy}, it is currently not clear how these can be estimated. Moreover, it is not clear what the correct error metric quantization techniques should be minimizing. 

\paragraph{Contribution.} In this paper, we start by examining  the relationship between per-layer error and global error from the theoretical perspective, and identify a natural ``linearity theorem'', which precisely links the model's performance in terms of output loss (or PPL), with the per-layer MSE quantization error over the weights. Roughly speaking, we show that, for reasonable quantization bit-widths, the relationship between per-layer MSE and output PPL is \emph{linear}, modulo a \textit{constant} scaling coefficient per layer, which is \textit{independent of the quantization approach}. 
We show experimentally that the theorem works remarkably well at predicting quantization error for different schemes in the 3-8 bit range; see Figure~\ref{fig:perplexity-validation} for an illustration.

The linearity theorem, whose technical prerequisites and complete proof we provide, has a few non-trivial practical implications. 
First, it guides us towards a state-of-the-art \emph{data-free} quantization method. 
Specifically, we start by observing that, for a fixed per-layer compression budget, the linearity theorem implies that minimizing the perplexity increase can be reduced to minimizing the individual, per-layer MSE quantization errors. Moreover, we observe that we can do this in a \emph{calibration-free} fashion if the model weights are incoherence-processed via Hadamard rotations---known to make them approximately Gaussian---and then we quantize using Gaussian-MSE-optimal grids, which are efficiently computable~\citep{GaussianCase}.  The resulting method, called HIGGS (for Hadamard Incoherence with Gaussian MSE-optimal GridS) is highly accurate and efficiently implementable for various bit-widths and grid constraints.

\begin{figure}[t]
    \centering
    \includegraphics[width=1\linewidth]{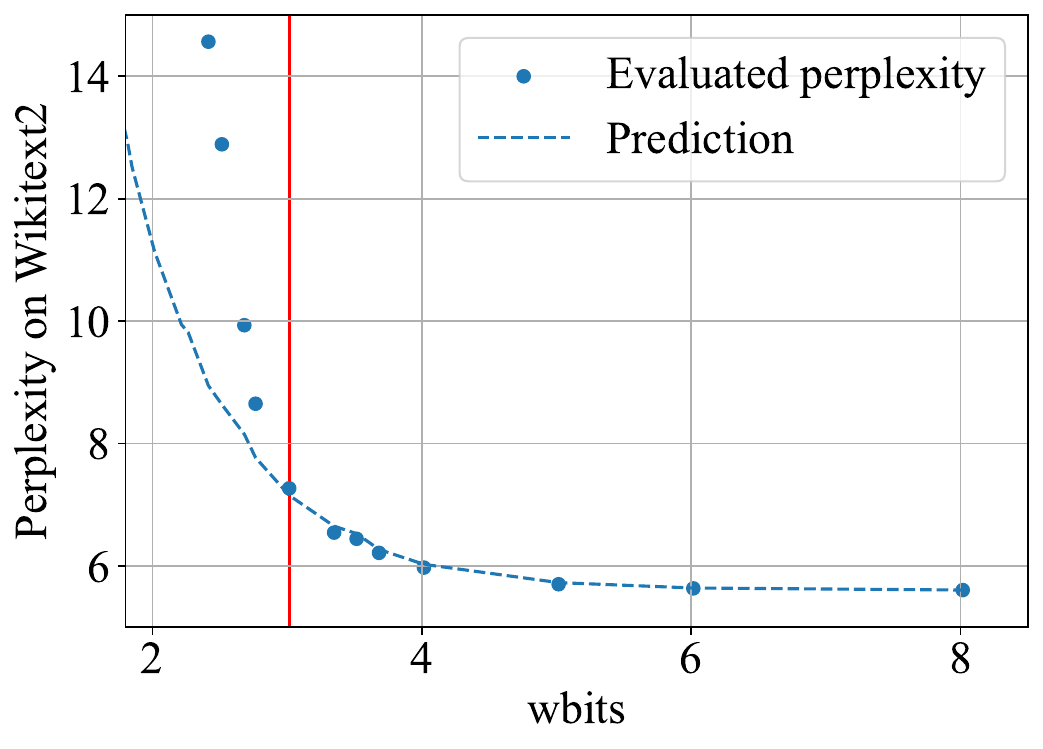}
    \caption{Actual \textit{measured} Perplexity (PPL) of quantized models versus \textit{predicted} PPL, following Theorem~\ref{main:ppl-theorem}, for uniform HIGGS quantization of Llama 3.1 8B in the 2--8 bit range. Vertical line shows the limit of the theorem's applicability}
    \label{fig:perplexity-validation}
\end{figure}

The second practical application of the linearity theorem comes for solving the \emph{non-uniform quantization} problem: 
that is, the problem of finding the per-layer bit-widths which satisfy a fixed constraint on the total model size / average bits per parameter, which minimize the perplexity increase. In the range of applicability of the linearity theorem, we show that optimal non-uniform compression can be reduced to knapsack-style dynamic programming over the set of quantization choices at each layer. Interestingly, in this range, this problem can be solved \emph{optimally} using existing linear programming solvers; in practice, solving an LLM-sized instance can be done in seconds.    
While this procedure requires computation of the per-layer linear scaling coefficients, we show that this can be done efficiently and even \emph{data-free}, based   on randomly sampled input token sequences. 
Moreover, interestingly, the two applications can be compounded, yielding an \emph{optimal non-uniform data-free} quantization technique, which we call \emph{dynamic HIGGS}. 

\begin{figure}[t]
    \centering
    \includegraphics[width=1\linewidth]{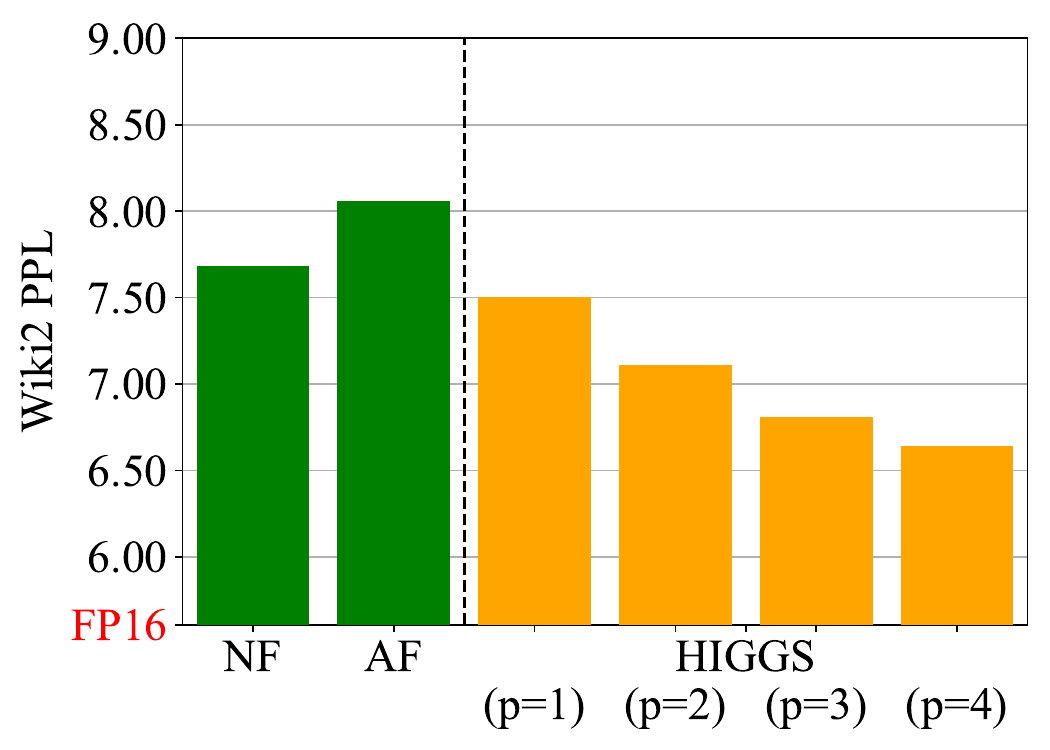}
    \caption{Comparison of Normal Float (NF), Abnormal Float (AF) and HIGGS on Llama 3.1 8B quantization to 3.19-3.25 bitwidth range. HIGGS is instantiated at different lattice dimensionalities $p$.}
    \label{fig:compact_ppl_comparison}
\end{figure}

We validate our practical applications experimentally by quantizing the popular Llama 3.1 and Llama 3.2 models~\citep{dubey2024llama3herdmodels}, as well as a Qwen model~\citep{bai2023qwen}, across a wide range of bit-widths, and evaluating on standard perplexity (PPL) and in-context learning (ICL) benchmarks. 
The results, sampled in Figure~\ref{fig:compact_ppl_comparison}, confirm the fact that HIGGS with uniform quantization can outperform the Normal Float (NF) and Abnormal Float (AF) formats for lower bit-widths in the 3-4 bit range, as well as the recent data-free HQQ method~\citep{badri2023hqq}.
At the same time, for higher bitwidths, we observe that all methods produce results in the same accuracy range. 
The \emph{dynamic, non-uniform} variant of HIGGS provides consistent additional accuracy boosts, and appears to lead state-of-the-art results for quantization methods with efficient hardware support. 
Surprisingly, we observe that dynamic HIGGS can even outperform \emph{calibration-based} methods such as GPTQ~\citep{frantar2022gptq} and AWQ~\citep{lin2023awq} in the 3-4 bit-width range. 
Moreover, HIGGS can be applied in conjunction with GPTQ, leading to state-of-the-art accuracy results for scalar quantization.  

On the runtime side, 
we show that our approach can be supported extremely efficiently via GPU kernels. 
Specifically we show that the recent FLUTE kernel design~\citep{guo2024fastmatrixmultiplicationslookup} can be adapted to support a subset HIGGS multi-dimensional grids, providing a high accuracy solution that is efficient across various batch sizes. Our solution can be integrated with both Pytorch~\citep{paszke2019pytorch} and vLLM~\citep{kwon2023efficient}, leading to speedups of 2-3x relative to FP16 precision, at a low decrease in accuracy relative to the FP16 baseline.

\section{Background and Related Work}

Post-training quantization~\citep{nagel2020up, gholami2021survey} of LLMs has become an extremely active research area. Here, we provide some background, focusing on the work closest to ours. 
The focus of early work in LLM quantization has been on \emph{data-free} methods~\citep{dettmers2022llm, yao2022zeroquant, park2022nuqmm} using direct round-to-nearest (RTN) quantization over small weight groups.  
For example, given a group of $g$ consecutive layer weights, viewed as a vector $\mathbf{x} \in \R^g$, we define $b$-bit RTN as
\begin{align}
\label{eq:quantization}
\cQ(\mathbf{x},\bits) &= \round \Bigg(\frac{\mathbf{x} - \min(\mathbf{x})}{\max(\mathbf{x}) - \min(\mathbf{x})} (2^\bits -1) \Bigg)  \nonumber \\
&= \round((\mathbf{x} - \zeros(\mathbf{x}))/\scale(\mathbf{x})),
\end{align}

\noindent where $\round$ rounds to the nearest integer level, $\zeros = \zeros(\mathbf{x}) = \min(\mathbf{x})$ is the ``zero point'' and $\scale = \scale(\mathbf{x}) =  (\max(\mathbf{x}) - \min(\mathbf{x})) / (2^\bits -1)$ is the min-max scale.

One key issue with this first wave of data-free RTN methods is that they tend to yield high accuracy loss below 8 bits per parameter. This can be addressed primarily in two ways: (1) by improving the rounding function in a \emph{data-aware} way, and (2) by using more complex \emph{non-uniform grids}.

\paragraph{Data-Aware Methods.} 
Calibration-based methods such as GPTQ~\citep{frantar2022gptq} improved significantly upon RTN by allowing a subset of weights to be adjusted during quantization, based on a sample of calibration data.  
Follow-up methods such as AWQ~\citep{lin2023awq},  SqueezeLLM~\citep{kim2023squeezellm}, OWQ~\citep{lee2024owq} and SpQR~\citep{dettmers2023spqr} implemented variants of outlier-aware quantization, where a small fraction of weights are effectively stored in higher precision.
Further, high-compression methods such as QuIP~\citep{chee2023quip}, QuIP\#~\cite{tseng2024quipbetterllmquantization}, QTIP~\citep{tseng2024qtip} and AQLM~\citep{egiazarian2024extreme} investigated much more complex quantized representations, such as lattice quantization, often paired with incoherence pre-processing of the weights, and GPTQ-like weight updates. 
While such methods can be Pareto-competitive down to 2 bits per parameter, some practical disadvantages are 1) the reliance on task-specific calibration data, 2) the relatively high processing time to produce models, as well as 3) the complexity of efficiently supporting  lattice representations at runtime. 

We emphasize the fact that the linearity theorem has no direct bearing on the \emph{data-aware} layer-wise MSE minimization problems considered in references such as GPTQ and QuIP, which are of the form $$\min_{\widehat{W}_l \in \Omega_l}\| W_l^{\star} X - \widehat{W}_l X\|_F^2,$$ where $W_l^{\star}$ is a matrix of pre-trained weights corresponding to layer $l$, $\widehat{W}_l=\cQ_l(W_l^{\star})$ represents the quantized weights, $X$ represents the layer's input, $
\Omega_l$ represents the collection of feasible/allowed quantized matrices, and $\|\cdot\|_F$ is the Frobenius norm. Here, we focus on the data-free case, and relate it to the quantization MSE over weights.

\paragraph{Data-free Non-Uniform Quantization.} 
To address the limitations of calibration-based methods, highly-popular open-source LLM inference frameworks such as \texttt{bitsandbytes}~\texttt{(BNB)}~\citep{bitsandbytes} employ \emph{data-free} quantization, but under optimized \emph{non-uniform} grids, designed to reduce reconstruction error. 
Specifically,~\citet{dettmers2023qloraefficientfinetuningquantized} proposed Normal Float (NF) grids which minimize \emph{quantization entropy}, while Abnormal Float (AF) \citep{yoshida2023nf4isntinformationtheoretically}, optimizes $\ell_1$ reconstruction error, arguing that it leads to better accuracy than NF. 
To optimize for those quantities, these works assume that LLM weights follow a \emph{zero-mean Gaussian distribution}, but do not enforce this assumption in any way. HQQ~\citep{badri2023hqq} provides and data-free algorithm to optimize the scale and zero-point for \emph{uniform} grids, while FLUTE~\citep{guo2024fastmatrixmultiplicationslookup}
 provides efficient GPU support for 1D non-uniform grids such as those of NF. 

 Recent work on data-aware methods~\citep{chee2023quip, tseng2024quipbetterllmquantization, ashkboos2024quarotoutlierfree4bitinference, liu2024spinquantllmquantizationlearned} applies  \emph{incoherence pre-processing} to the weights, often in the form of Hadamard transforms, to enforce a better match between the distribution of processed weights and the Gaussian. Yet, surprisingly, incoherence has so far only been used in the context of \emph{data-aware} and \emph{uniform-grid} quantization methods. 

Our work starts from a simple and general way of linking per-layer compression error with the global model loss increase. This inspires two different applications to data-free and non-uniform quantization, which are complementary to the aforementioned work. 

\paragraph{Additional Related Work.} 
The combination of Hadamard preprocessing and Gaussian MSE-optimal grids has also been proposed for \emph{gradient compression} in distributed optimization \citep{vargaftik2021driveonebitdistributedmean, vargaftik2022edencommunicationefficientrobustdistributed, davies2020new}. Gaussian MSE-optimal grids have been applied to multi-dimensional numerical integration by \citet{GaussianCase}. From them, we borrow the CLVQ algorithm for optimal grid computation given a set of parameters.

\section{The Linearity Theorem}

This section provides an overview of the linearity theorem, which links the layer-wise L2 error induced by quantization, to the increase in model perplexity, providing a theoretical foundation for weight quantization methods.

\subsection{Notation}

{\bf Pre-trained model.} Let $W^\star \eqdef (W_1^\star,\dots,W_L^\star)$, where for each $l$ in the set $\{1,\dots,L\}$, by $W_l^\star \in \R^{d^l_{in} \times d^l_{out}}$ we denote the matrix representing a linear layer of the pre-trained model  we are interested in compressing/quantizing. 

{\bf Reshaping operator.} Given a layer index $l$, let ${\cal R}_l: \R^{d_{in}^l \times d_{out}^l}  \to \R^{d_{in}^l \cdot d_{out}^l}$ be the ``reshaping'' operator, reshaping a matrix into a large-dimensional vector. That is, ${\bf w}_l ={\cal R}_l(W_l)$ is the vector obtained from the matrix $W_l$ by concatenating entries of $W_l$ into a single $d^l\eqdef d_{in}^l \times d_{out}^l$ dimensional vector. The entries can be concatenated in any order as long as it is always fixed.  Note that $\|W_l\|_F = \|{\cal R}_l(W_l)\|_2 = \|{\bf w}_l\|_2$. Further, let ${\cal R}_l^{-1}$ be the inverse reshaping operator mapping ${\bf w}_l$ back to $W_l$, such that
$ {\cal R}_l^{-1}({\bf w}_l) = {\cal R}_l^{-1}( {\cal R}_l(W_l ))  = W_l$.  Let ${\bf w} \eqdef ({\bf w}_1,\dots,{\bf w}_L ) \in \R^d$, where $d \eqdef \sum_{l=1}^L d^l$, and ${\cal R}^{-1}({\bf w}) \eqdef ({\cal R}_1^{-1}({\bf w}_1), \dots, {\cal R}_L^{-1}({\bf w}_L)) $.
Define ${\cal R}$ in a similar manner, and let ${\bf w}^\star \eqdef {\cal R}(W^\star)$ be the ``vector'' representation of the pre-trained model in $\R^d$.

{\bf Perplexity.} Let $\phi:\R^{d} \to \R$ be the perplexity function on $\R^d$ defined formally as $$\phi({\bf w}) \eqdef PPL({\cal R}^{-1}({\bf w})),$$
where $PPL$ is the perplexity function operating in the space of $W$.

\subsection{Technical Assumptions}

 Our results hold under the following assumptions, which we describe and discuss below. 

\begin{assumption}[Local optimality of the pre-trained weights] \label{ass:min} The \emph{uncompressed model weights} ${\bf w}^\star$  are a local minimizer of perplexity $\phi$.
\end{assumption}

 It is easy to see that if $W^\star$ is a local minimizer of $PPL$ if and only if ${\bf w}^\star = {\cal R}(W^\star)$ is a local minimizer of $\phi$. We emphasize that Assumption~\ref{ass:min} is not needed if the compression  mechanism used to compress each matrix $W^\star_l$ is unbiased, i.e., if $\Exp{\widehat{W}_l} = W^\star_l$ for all $ l\in \{1,\dots,L\}.$ We will leverage this observation in Section~\ref{sec:error-prediction}.



\begin{assumption}[Local smoothness of perplexity] \label{ass:smooth} The perplexity function $\phi$ is three times continuously differentiable in a neighborhood of ${\bf w}^\star$.
\end{assumption}

Recall that  $W^\star=(W_1^\star,\dots,W_L^\star)$ represents the pre-trained weights.  Let 
 $D^\star_l \eqdef \|W^\star_l\|_F I_{d_l} \in \R^{d_l\times d_l}$ for $l\in \{1,\dots,L\}$, and $$D^\star \eqdef Diag(D^\star_1,\dots,D^\star_L)\in \R^{d \times d},$$ 
 where $I_{d_l}$ is the $d_l\times d_l$ indentity matrix.
 
\begin{assumption}[Regularity of pre-trained weights] \label{ass:reg} There  exists a block-diagonal matrix $Z=Diag(Z_1,\dots,Z_L)$, where $Z_l = z_l I_{d_l}$,  $z_l >0$  for all $l \in \{1,\dots,L\}$, such that
 \begin{equation}
    \label{eq:product_ass3}
     D^\star \nabla^2 \phi\left( {\cal R}(W^\star)\right) D^\star \approx Z.
 \end{equation}
\end{assumption}

\paragraph{Discussion.}
If $\phi$ is twice differentiable (which is implied by Assumption~\ref{ass:smooth}), and ${\bf w}^\star$ is a local minimizer of $\phi$ (see Assumption~\ref{ass:min}), then the Hessian $\nabla^2 \phi({\bf w}^\star)$ is necessarily positive semi-definite. Clearly, $D^\star$ is diagonal with non-negative entries, and hence it is positive semi-definite. Therefore, $D^\star \nabla^2 \phi\left( {\cal R}(W^\star)\right) D^\star$ is also positive semi-definite. Let $\lambda_{\min}$ (resp.\ $\lambda_{\max}$) be the smallest (resp.\ the largest) eigenvalue of $D^\star \nabla^2 \phi\left( {\cal R}(W^\star)\right) D^\star$. Then
\[ \lambda_{\min} I_d \preceq  D^\star \nabla^2 \phi\left( {\cal R}(W^\star)\right) D^\star \preceq \lambda_{\max} I_d .\]
 We validate that Assumption~\ref{ass:reg} holds on language models in Appendix~\ref{sec:assmp_3_justification}.

\subsection{Theorem Statement} 

With this in place, we can now state our result. 

\begin{theorem}[Linearity theorem]\label{main:ppl-theorem}
Let the above assumptions hold.
Given an arbitrary layer index $l$ and an arbitrary (possibly stochastic) quantizer function $\cQ_l$,  let $\widehat{W}_l \eqdef \cQ_l({W}_l^\star)$ be the compressed version of the layer weights $W_l^\star$.
For each layer $l$, define the parameter $t_l$ as the relative quantization error, i.e.: 
\begin{equation}t_l^2 = t_l^2(W_l,\cQ_l)\eqdef \frac{\Exp{ \| \widehat{W}_l - W_l^\star \|_F^2 } }{  \|W_l^\star\|_F^2}, \label{eq:t_l-def} \end{equation}
Then, as long as $t_1,\dots,t_L$ are small enough, the following linear approximation of the expected perplexity holds:
\begin{eqnarray}\Exp{\id{PPL}(\widehat{W})}   \approx  \id{PPL}(W^\star)  +  \sum_{l=1}^L \alpha_l t_l^2, \label{eq:linearity}
\end{eqnarray}
where expectation is taken w.r.t.\ the randomness in the compression process, and the terms $\alpha_l$ are layer specific constants that are independent of the compression process.
\end{theorem}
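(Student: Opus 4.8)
The plan is to perform a second-order Taylor expansion of the perplexity $\phi$ around the pre-trained weight vector ${\bf w}^\star$ and then exploit the three assumptions to collapse the Hessian term into the claimed sum $\sum_l \alpha_l t_l^2$. First I would write $\widehat{\bf w} \eqdef {\cal R}(\widehat W)$ and $\Delta \eqdef \widehat{\bf w} - {\bf w}^\star$, noting that $\Delta$ is block-structured: $\Delta = (\Delta_1,\dots,\Delta_L)$ with $\Delta_l = {\cal R}_l(\widehat W_l - W_l^\star)$, so that $\|\Delta_l\|_2^2 = \|\widehat W_l - W_l^\star\|_F^2 = t_l^2 \|W_l^\star\|_F^2$ by \eqref{eq:t_l-def}. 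By Assumption~\ref{ass:smooth}, $\phi$ is $C^3$ near ${\bf w}^\star$, so Taylor's theorem with Lagrange (or integral) remainder gives
\begin{equation*}
\phi(\widehat{\bf w}) = \phi({\bf w}^\star) + \nabla\phi({\bf w}^\star)^\top \Delta + \tfrac12 \Delta^\top \nabla^2\phi({\bf w}^\star)\, \Delta + R_3,
\end{equation*}
where $\|R_3\| \le \tfrac16 M \|\Delta\|_2^3$ for a bound $M$ on the third derivative on a neighborhood. By Assumption~\ref{ass:min}, ${\bf w}^\star$ is a local minimizer of $\phi$, hence $\nabla\phi({\bf w}^\star) = 0$ and the linear term vanishes. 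Taking expectation over the compression randomness yields $\Exp{\phi(\widehat{\bf w})} = \phi({\bf w}^\star) + \tfrac12 \Exp{\Delta^\top \nabla^2\phi({\bf w}^\star)\Delta} + \Exp{R_3}$.

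Next I would insert the regularity structure. Write $H \eqdef \nabla^2\phi({\bf w}^\star)$ and factor through $D^\star$: since $D^\star = \mathrm{Diag}(\|W_1^\star\|_F I_{d_1},\dots,\|W_L^\star\|_F I_{d_L})$ is invertible (each $\|W_l^\star\|_F > 0$), set $\delta \eqdef (D^\star)^{-1}\Delta$, so that $\delta_l = \Delta_l / \|W_l^\star\|_F$ and $\|\delta_l\|_2^2 = t_l^2$. Then $\Delta^\top H \Delta = \delta^\top (D^\star H D^\star) \delta$. By Assumption~\ref{ass:reg}, $D^\star H D^\star \approx Z = \mathrm{Diag}(z_1 I_{d_1},\dots,z_L I_{d_L})$, so
\begin{equation*}
\Delta^\top H \Delta \approx \delta^\top Z \delta = \sum_{l=1}^L z_l \|\delta_l\|_2^2 = \sum_{l=1}^L z_l\, t_l^2.
\end{equation*}
Taking expectations and setting $\alpha_l \eqdef z_l / 2$ — which is a constant determined by the pre-trained model and independent of the quantizers $\cQ_l$ — gives $\Exp{\phi(\widehat{\bf w})} \approx \phi({\bf w}^\star) + \sum_{l=1}^L \alpha_l t_l^2$, which is exactly \eqref{eq:linearity} after translating $\phi({\bf w}) = \id{PPL}({\cal R}^{-1}({\bf w}))$ back into $PPL$ notation.

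The two places where ``$\approx$'' is doing real work, and hence the main obstacles, are: (i) controlling the cubic remainder $\Exp{R_3}$ relative to the quadratic main term, and (ii) quantifying the error incurred by replacing $D^\star H D^\star$ with $Z$. For (i), I would argue that $\Exp{\|\Delta\|_2^3}$ scales like (the cube of) the typical per-layer error magnitude, whereas the quadratic term scales like its square; as the $t_l$ shrink — the ``small enough'' hypothesis, concretely the regime of $\gtrsim 3$ bit-widths validated empirically in Figure~\ref{fig:perplexity-validation} — the remainder is lower order and can be absorbed into the approximation. A clean way to state this is that for any $\epsilon>0$ there is a threshold $\tau$ so that $\max_l t_l \le \tau$ forces $|\Exp{R_3}| \le \epsilon \sum_l t_l^2$. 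For (ii), the bound $\lambda_{\min} I_d \preceq D^\star H D^\star \preceq \lambda_{\max} I_d$ from the Discussion shows the quadratic form $\delta^\top (D^\star H D^\star)\delta$ is always pinched between $\lambda_{\min}\sum_l t_l^2$ and $\lambda_{\max}\sum_l t_l^2$; Assumption~\ref{ass:reg} is precisely the empirically-checked statement that the off-block and within-block-anisotropic parts of $D^\star H D^\star$ are negligible, so that the pinching collapses to the single coefficient $z_l$ per block. I would remark that if instead each $\cQ_l$ is unbiased ($\Exp{\widehat W_l} = W_l^\star$), then $\Exp{\Delta} = 0$ and the linear term vanishes by that fact alone, so Assumption~\ref{ass:min} is not needed — matching the remark in the text preceding Section~\ref{sec:error-prediction}.
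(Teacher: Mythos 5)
Your proposal is correct and takes essentially the same route as the paper's proof: a second-order Taylor expansion at the local minimizer (so the gradient term vanishes by Assumption~\ref{ass:min}), collapsing the quadratic form via Assumption~\ref{ass:reg} so that $\Delta^\top \nabla^2\phi({\bf w}^\star)\Delta \approx \sum_{l} z_l \|\delta_l\|_2^2$, and taking expectation; the paper parametrizes the perturbation as $D T \xi$ and so writes $\alpha_l = z_l M_{2,l}/2$, which coincides with your $\alpha_l = z_l/2$ after absorbing the normalization. One small imprecision: $\|\delta_l\|_2^2$ is random and equals $t_l^2$ only in expectation, but your final expectation step makes the conclusion exact, so nothing is lost.
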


\paragraph{Discussion.} 
The proof of the above result can be found in Appendix~\ref{app:ppl-theorem}.
The result essentially says that, given an \emph{arbitrary} (possibly randomized) perturbation function applied over the weights, if we can compute bounds $t_l$ on the (relative) Frobenius norm of the perturbation at each layer, then there exist layer-wise constant coefficients $\alpha_l$ such that the linear approximation of the global perplexity increase in Eqn.~(\ref{eq:linearity}) holds.
Importantly, the coefficients $\alpha_l$ are ``universal,'' in the sense that their values depend only on the layer weights, and not on the quantization function.  
In the following, we will explore two of its practical implications.

\section{HIGGS: Hadamard Incoherence and Gaussian MSE-Optimal Grids}

\subsection{The Hadamard Incoherence Trick}\label{sec:linear_hadamard}

Theorem~\ref{main:ppl-theorem} defines two sets of coefficients for each layer $l$: (1) The \emph{error coefficients} $t_l$, which measure the error relative to the layer's norm; and (2) The \emph{scaling coefficients} $\alpha_l$ measuring the importance of the per-layer error towards the output. 

Imagine that we would wish to compute these coefficients, in order to upper bound the compression error. 
One key issue is that, while the scaling coefficients $\alpha_l$ are compression-independent, the error coefficients $t_l = t_l(W_l^\star, \cQ_l)$ defined in Eqn.~(\ref{eq:t_l-def}) are specific to both the layer being compressed and \emph{to the quantizer used for compression}.  

However, we can \emph{remove this weight distribution dependence of the $t_l$ coefficients} by applying pre-processing to the weights. Specifically, it is well-known~\citep{doi:10.1137/060673096, suresh2017distributedmeanestimationlimited, chee2023quip} that multiplication of the layer weights with the Random Hadamard Transform (RHT) leads the weight distribution to closely match a Gaussian distribution, \emph{independently of the original weights}. 

Specifically, let us assume that we are applying the RHT to the weights, and then rounding to an arbitrary grid $\mathcal{G}_n^p$. The exact procedure is described in Algorithm~\ref{alg:vqrht}.
Since, post-RHT, the weight distribution is approximately Gaussian, we obtain that, in this case, the layer error coefficients $t_l$ will only depend on the chosen grid, and \emph{not on the original weights}. (Please see the proof of this fact in Appendix~\ref{app:mse_proof}.)
More specifically, $t_l^2$ approximately equals the per-dimension MSE of rounding the multivariate standard normal distribution to the grid $\mathcal{G}_n^p$, \textbf{which is constant given $n$ and $p$, and independent of the original weights}. 

In this context, Theorem~\ref{main:ppl-theorem} implies that if weights are Hadamard-transformed, then \textbf{MSE-optimized grids should be theoretically-optimal in terms of end-to-end model error}, given a fixed bit budget.

In the following subsection, we detail and expand these observations.

\begin{algorithm*}[t]
\caption{Vector Quantization with Random Hadamard Transform (\texttt{RHT-VQ})}\label{alg:vqrht}
{\small
\hspace*{\algorithmicindent} \textbf{Parameters:} grid $\mathcal{G}_n^p$ of $n$ elements of dimension $p$, scales group size $g$ that is a power of 2.

\hspace*{\algorithmicindent} \textbf{Input:} vector $\mathbf{w} = (\mathbf{w}_{\{1\}}, \dots,  \mathbf{w}_{\{D/g\}}) \in \mathbb{R}^D$, RHT seed $\xi$.

\hspace*{\algorithmicindent} \textbf{Output:} quantized vector $\mathbf{q^\dagger} \in \{1, 2, \dots, n\}^{D/p}$, scales vector $\mathbf{s} \in \mathbb{R}^{D/g}$.
    \begin{algorithmic}
    \State Sequentially partition $\mathbf{w} \in \R^D$ into $D/g$ subvectors $\mathbf{w}_{\{i\}} \in \R^g$, where $i = 1,\dots,D/g$     
    \For{$i = 1, \dots, D/g$}
        \State $s_i = \|\mathbf{w}_{\{i\}}\|_2$
        \State $\mathbf{w}^\dagger_{\{i\}} = \id{RandomHadamardTransform}(\mathbf{w}_{\{i\}} / s_i, \xi)$ \Comment{entries of $\mathbf{w}^\dagger_{\{i\}}$ are approx. from $\mathcal{N}(0, 1)$}
        \State $\mathbf{q}^\dagger_{\{i\}} =  \id{RoundToNearest}(\mathbf{w}_{\{i\}}^\dagger, \mathcal{G}_n^p)$ \Comment{Projecting $d$ sequential values together}
    \EndFor
    \State $\mathbf{q}^\dagger = \left[\mathbf{q}^\dagger_{\{1\}},...,\mathbf{q}^\dagger_{\{D/g\}}\right]$\Comment{$\dagger$ signifies that the vector is in Hadamard transformed space}
    \State $\mathbf{s} = \left[s_1,...,s_{D/g}\right] / \sqrt{g}$
    \end{algorithmic}
}
\end{algorithm*}

\subsection{MSE-optimal grids for LLM quantization}

Our previous insights lead to a simple alternative to Normal Float (NF) and Abnormal Float (AF) grids: after Hadamard rotations, we can quantize to a grid minimizing the $L2$ (MSE) quantization error. 
We call this approach \textbf{H}adamard \textbf{I}ncoherence and \textbf{G}aussian MSE-optimal \textbf{G}rid\textbf{S} (HIGGS). The algorithm combines the following components to achieve minimal quadratic quantization error: 1) Hadamard preprocessing of the quantized weights, 2) multi-dimensional (vector) quantization, and 3) Gaussian MSE-optimal quantization grids. 

Section~\ref{sec:linear_hadamard} described the exact quantity we need to optimize when choosing $\mathcal{G}_n^p$: the expected MSE of rounding the multivariate Normal distribution to $\mathcal{G}_n^p$. This problem has a rich history, and it can be solved optimally by the \citet{GaussianCase} algorithm arising in numerical PDEs. We use the same grid optimization procedure, as well as some pre-computed optimal grids. Applying those grids to Algorithm~\ref{alg:vqrht} constitutes Algorithm~\ref{alg:higgs}. It  important to note that the optimal grid only has to be computed once for any pair of  $n$ and $p$.

\begin{algorithm}
\caption{HIGGS Algorithm}\label{alg:higgs}
{\small
\hspace*{\algorithmicindent} \textbf{Parameters:} grid dimensions $n$ and $p$, scales group size $g$ that is a power of 2.

\hspace*{\algorithmicindent} \textbf{Input:} Algorithm~\ref{alg:vqrht} input.

\hspace*{\algorithmicindent} \textbf{Output:} Algorithm~\ref{alg:vqrht} output.
    \begin{algorithmic}
    \State $\mathcal{G}_n^p = \texttt{CLVQ}(n, p)$\Comment{\citep{GaussianCase}, computed once}
    \State $(\mathbf{q}^\dagger, \mathbf{s}) = \texttt{RHT-VQ}(\mathcal{G}_n^p, g, \mathbf{w}, \xi)$\Comment{Algorithm~\ref{alg:vqrht}}
    \end{algorithmic}
}
\end{algorithm}

To validate HIGGS, we compare it with other quantization grids, namely Normal Float (NF)~\citep{dettmers2023qloraefficientfinetuningquantized} and  Abnormal Float (AF)~\citep{yoshida2023nf4isntinformationtheoretically}. The results, sampled in
Figure~\ref{fig:compact_ppl_comparison},
indicate that HIGGS outperforms other grids in terms of output perplexity on WikiText-2~\citep{merity2016pointersentinelmixturemodels}. A more detailed comparison, including more baselines as well as zero-shot and few-shot tasks for both low and high bitwidth quantization can be found in Table~\ref{tab:method_comparison}.

\subsection{Practical Configurations}\label{sec:kernels}

HIGGS has a number of hyperparameters: the grid size $n$, the grid dimension $p$ and the group size $g$. Varying those, we can, in theory, achieve any per-parameter bitwidth. However, a number of practical considerations apply if we consider setups that can be efficiently implemented in practice:

\noindent\textbf{Constraint 1:} To optimize memory efficiency, $n$ must be a power of $2$. Since most modern architectures support data types with a minimum granularity of $1$ byte ($8$ bits), it is advantageous if $\log_2(n)$ is a multiple of $8$. When $\log_2(n)$ is less than $8$ but is still a power of $2$ (e.g., $n {=} 4, 16$), standard bit-packing methods can be used effectively. However, for cases where $\log_2(n)$ is not a power of $2$ (e.g., $n {=} 8, 64$), the data can be efficiently managed by partitioning it into multiple sections or bit-slices~\cite{xia2024fp6}.

\noindent\textbf{Constraint 2:} Grid memory access patterns can be sporadic. The ability to store the whole grid in low-latency memory would improve the performance of both decoding and matrix-multiplication operations. On modern GPUs, taking into account the usual shared-memory size of around 128Kb and $\frac{32}{k}\times$ replication to avoid bank conflicts, that would mean that the total number of points in the grid $2^{k \times p}$ can be at most $\approx 2^{10}$. Increasing the grid dimension at fixed bitwidth reduces the expected error, as seen in Figure~\ref{fig:compact_ppl_comparison}. This limitation creates a quantization error lower bound dictated by which dimensions we can use in practice.

The quantized matrix can be either restored via the Inverse Hadamard Transform or processed in the transformed space directly with virtually no matrix multiplication complexity overhead. (Refer to Appendix~\ref{app:processing_rotated} for theoretical and practical justification). Moreover, the Hadamard Transform functionality can be fully isolated from the matrix multiplication itself, allowing us to reuse existing lookup-table-based kernels for the latter.

\paragraph{FLUTE kernel.}
\label{paragraph:flute}
LLM decoding is typically memory-bound in the low-batch regime, making a GPU kernel that fuses dequantization and GEMM essential for achieving practical performance gains. A key component of such a fused HIGGS kernel is a primitive for vectorized indexing into the grid, implemented via a lookup table. The problem of implementing such kernel has been extensively studied by \citet{guo2024fastmatrixmultiplicationslookup}, resulting in their developing FLUTE: a scalar lookup table matrix multiplication kernel. FLUTE efficiently stores the lookup table in the GPU's shared memory, enabling faster on-chip memory access. Moreover, it efficiently optimizes the dot product computation patterns speeding up the processing of larger batch sizes.

The simplicity of the HIGGS design makes it compatible with FLUTE out of the box for grids where $p=1$. By adapting the kernels for vectorized lookups, we were able to unlock this functionality also for $p=2$. This extension allows us to handle vector-quantized data, supporting configurations $p \in \{1, 2\}$ and $b \in \{2, 3, 4\}$. In practice, $p=2$ is always preferable to $p=1$. We will refer to those setups as FLUTE grids.

Table~\ref{tab:kernel_speed} demonstrates the performance of the FLUTE lookup table approach for HIGGS ($p=2$), relative to MARLIN~\citep{frantar2024marlinmixedprecisionautoregressiveparallel} uniform quantization, Normal Float (NF)~\citep{dettmers2023qloraefficientfinetuningquantized}, AQLM~\citep{egiazarian2024extreme} and the QTIP~\citep{tseng2024qtip} specialized trellis quantized matrix multiplication kernels. For the QTIP and FLUTE kernels, these measurements already include the cost of the underlying Hadamard transforms. As we can see, FLUTE kernels achieve the best performance across a variety of bitwidths and batch sizes.

\paragraph{Constrained HIGGS.}
\label{paragraph:marlin}
To extend our method to higher bitwidths not supported by FLUTE (e.g. 8bit), we propose to reuse the existing uniform quantized matrix multiplication kernels \citep{ladder-osdi24, torchao}. To bridge the gap between HIGGS and those kernels in this high-density setting, we constrain the HIGGS grid to be uniform, essentially solving for positioning and scaling of uniform grids to minimize expected MSE over the Gaussian distribution. Such grids might be suboptimal in terms if MSE, but make up for it in terms of kernel support. In practice, we use this trick to allow for $p=1$, $b=8$ inference, to which we will refer as CH8.

\begin{table*}[t]
\centering
\caption{End-to-end throughput (tok/s, higher is better) comparison of quantized matrix multiplication kernels for Llama-3.1-8B at different bitwidths and batch sizes on an NVIDIA RTX 4090 GPU. We observe that MARLIN (which only supports uniform grids) and FLUTE are the only approaches that support speedups are batch sizes larger than 1, relative to FP16.}
\label{tab:kernel_speed}
\begin{tabular}{l|ccc|ccc|ccc}
\toprule
batch size & \multicolumn{3}{c|}{1}  & \multicolumn{3}{c|}{4}   & \multicolumn{3}{c}{16}  \\ \hline
FP16       & \multicolumn{3}{c|}{57} & \multicolumn{3}{c|}{224} & \multicolumn{3}{c}{862} \\ \midrule
\diagbox{kernel}{wbits}      & 2       & 3     & 4     & 2       & 3      & 4     & 2       & 3     & 4     \\ \hline
MARLIN     & -       & -     & 133   & -       & -      & 530   & -       & -     & 1873  \\
NF4        & -       & -     & 31    & -       & -      & 101   & -       & -     & 399   \\
AQLM       & 69     & -       & -    & 81      & -     & -       & 312    & -     & -     \\
QTIP       & 155     & 136   & 122   & 230     & 190    & 166   & 249     & 202   & 177   \\
FLUTE      & \textbf{173}     & \textbf{150}   & \textbf{139}   & \textbf{687}     & \textbf{592}    & \textbf{548}   & \textbf{2432}    & \textbf{2122}  & \textbf{1979}  \\
\bottomrule
\end{tabular}
\end{table*}

\subsection{Application to GPTQ}
\label{sec:gptq}

HIGGS can naturally be used in more sophisticated rounding schemes that utilize layer activations information to achieve smaller effect of quantization on model performance (1-shot quantization methods). Extension to GPTQ~\citep{frantar2022gptq}, one of the most popular such methods, in the form as we propose it can quickly be described as replacing the $\id{RoundToNearest}$ operation in Algorithm~\ref{alg:vqrht} with a different rounding operator that takes layer activations information into account. The resulting 1-shot quantized weights are structurally identical those obtained from Algorithm~\ref{alg:vqrht}.

In Table~\ref{tab:1-shot}, we present comparison of original GPTQ~\citep{frantar2022gptq}, AQLM~\citep{egiazarian2024extreme}, QuIP\#~\citep{tseng2024quipbetterllmquantization}, QTIP~\citep{tseng2024qtip} the and GPTQ extension of HIGGS (for details on this scheme, see  Appendix~\ref{app:exp_configurations}). Although the latter does not outperform the more complicated quantization schemes such as AQLM, QuIP\# and QTIP, we note that these methods use a more complex representation, and therefore provide very limited kernel support due to complexity of these representations. The GPTQ extension of HIGGS, on the other hand, can be mapped to  FLUTE kernels~\citep{guo2024fastmatrixmultiplicationslookup} achieving high throughput at a variety of setups. The full results can be examined in Table~\ref{tab:kernel_speed}, showing that we can reach close to 3x speedups in some configurations, and that this speedup is consistent across batch sizes from 1 to 16.

\begin{table*}
\centering
\caption{WikiText-2 PPL comparison of various 1-shot quantization methods for Llama-2-7b.}
\label{tab:1-shot}
\begin{tabular}{c|c|ccccc}
\hline
FP16                   &              & GPTQ  & AQLM  & QuIP\# & QTIP  & GPTQ+HIGGS ($p=2$) \\ \hline
\multirow{3}{*}{5.117} & wbits$\approx$2 & -     & 8.180 & 8.220  & 6.820 & 8.637      \\ \cline{2-7} 
                       & wbits$\approx$3 & 5.776 & -     & 5.600  & 5.380 & 5.559      \\ \cline{2-7} 
                       & wbits$\approx$4 & 5.254 & -     & 5.220  & 5.170 & 5.213      \\ \hline
\end{tabular}
\end{table*}

\section{Variable Bitwidth Quantization}\label{sec:variable_bitwidth}

The second application of the Linearity Theorem is in \emph{dynamic} quantization, i.e. choosing per-layer quantization bitwdiths that best reflect the “sensitivity” of different layers to quantization. Here, we leverage the observation that uniform biwidth compression might be far from optimal in terms of output error~\cite{owl}.  
Finding the optimal configuration is challenging due to exponentially-many possible solutions. 
We show that our quantization error model can efficiently find the optimal configuration for any target bitwidth, without having to evaluate all possible configurations.

\paragraph{Discrete Optimization Formulation.}
\label{sec:error-prediction}
Assume a natural setting in which we wish to quantize each layer $W_l^\star$ using one quantizer from a finite set of options $\{\cQ_1, \ldots, \cQ_J\},$
each with its own  error. Let $j_l \in \{1, \dots, J\}$ denote the selection of the quantizer for layer $l$. Assume that quantizer $\cQ_{j_l}$ corresponds to a specific bitwidth $b_{j_l}$ and specific induced  error $t_{l,j_l}^2$ from Eqn.~(\ref{eq:t_l-def}). 
We wish to find the optimal assignment \emph{minimizing perplexity error}, while matching a specific average bitwidth $b_{\max}$. 

\begin{algorithm}[t]
\caption{Error coefficient calibration}\label{alg:ecc}
{\small
\hspace*{\algorithmicindent} \textbf{Input:} Calibration constants $t_1,\dots, t_J$; pre-trained model $W^\star=(W_1^\star,\dots,W_L^\star)$ 

\hspace*{\algorithmicindent} \textbf{Output:} linear coefficients $\alpha_1,\dots,\alpha_L$

    \begin{algorithmic}

    \For{$l=1,\dots,L$}
        \For{$j=1,\dots,J$}
            \State $\Delta_{l,j} = \id{PPL}(W^\star(l,t_j))-\id{PPL}(W^\star)$
        \EndFor
        \State $\alpha_l = \arg\min \limits_{\alpha_l'} \sum_{j=1}^{J}  \left(\Delta_{l,j} - \alpha_l' \cdot t_j^2\right)^2$
    \EndFor
    \end{algorithmic}
}
\end{algorithm}

\paragraph{Problem Formulation.} Using error linearity from Theorem~\ref{main:ppl-theorem}, and the coefficients $\alpha_1,\dots,\alpha_L$ estimated via Algorithm~\ref{alg:ecc} as an input, this minimization problem can be written as 
\begin{equation}
\begin{gathered}
\label{eq:objective}
\min_{j_1,...,j_L} \sum_{l=1}^L \alpha_l \cdot t_{l,j_l}^2 \\
\sum_{l=1}^L b_{j_l} \cdot d^l \leq b_{\max} \cdot d\\
j_l \in \{1, \dots, J\} \text{ for all } l \in \{1,\dots,L\}\\ 
\end{gathered}
\end{equation}
where $b_{\max}$ is the target bitwidth. Recall that $d\eqdef \sum_{l=1}^L {d^l}$.

\paragraph{Estimating Scaling Coefficients.} 
Theorem~\ref{main:ppl-theorem} shows that the scaling coefficients $\alpha_l$ do {\em not} depend on the quantization method used. We use this fact to estimate these coefficients without using a real quantization method, and instead introduce a Gaussian noise insertion procedure, described in Appendix~\ref{app:gaussian-noise-insertion}. In this method we add normal noise to the weights that emulates the quantization error.
An advantage that we get is that we can accuracy regulate $t$ value.
Then we apply multiple ($J$) calibration noise levels  that are uniformly sampled from applicability region to each level to estimate the coefficients $\alpha_l$.
Algorithm~\ref{alg:ecc} describes the procedure. Note that $W^\star(l,t_j)$ represents the model $W^\star$ with all layers intact except for layer $l$, which is replaced by $\widehat{W}_l$ via the Gaussian noise insertion procedure with noise $t_j$, described in Eqn.~(\ref{eq:GNI}); see also (\ref{eq:889900}).  We found that sampling $J=15$ noise levels from linear theorem applicability range is enough to get accurate coefficients $\alpha_l$.

\paragraph{Measuring Grid Parameters.}
Grid bitwidths $b_j$ are inherent grid parameters (e.g. supported bit-widths) and known by design. The grid distortions $t_{l,j}^2$ can be measured explicitly by quantizing $W_l^\star$ with $\cQ_j$, creating a ``database'' of per-layer errors, across supported bitwidths. 

\paragraph{Solving the Problem.} We then solve the global minimization problem for the obtained $\alpha_{l}$, $b_j$ and $t_{lj}$ coefficients, acquiring the optimal quantization configuration for the given average bitwidth. Since the error function to be minimized \emph{is summable}, Equation~\ref{eq:objective} can be expressed as a linear programming problem, which can be solved using already existing optimization libraries.
Specifically, we use the CP-SAT solver from Google OR-Tools library~\citep{cpsatlp}, that can optimally solve this problem.
Figure \ref{fig:layerwise_ppl} demonstrates the practical dependence of the optimized objective on budged $b_{\max}$.

\paragraph{Data Free Dynamic Quantization.}
We also present a data-free dynamic quantization mode for our method.
Before, we used a calibration dataset to estimate the error coefficients $\alpha_l$, making it data-dependent.
To avoid the need for calibration dataset, we can change the metric we use in calibration described in Algorithm~\ref{alg:ecc} from perplexity on a calibration dataset to KL-divergence between pretrained and quantized models on randomly sampled text tokens.
We evaluate KL-divergence on 287k random tokens that are not shared between evaluations.

\begin{figure}[ht]
   \centering
   \includegraphics[width=1\linewidth]{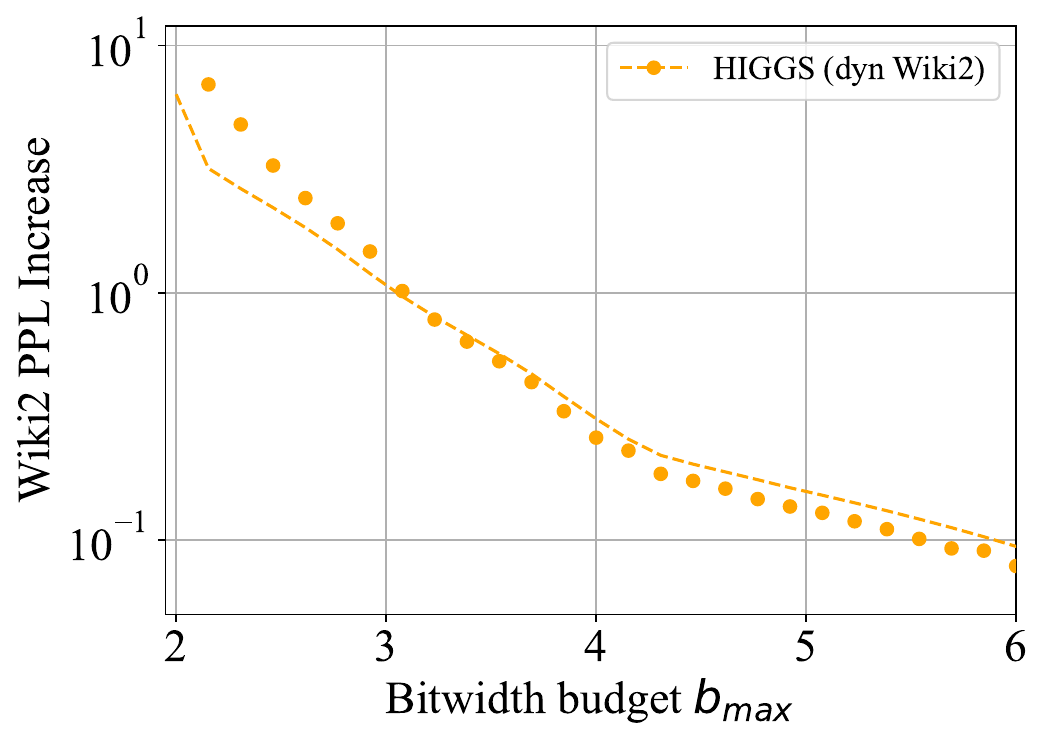}
   \caption{
   Demonstration of WikiText-2 PPL increase as a function of bitwidth budget $b_{\max}$ for layer-wise dynamic bitwidth quantization for Llama 3.1 8B. Dotted lines represent Linear Model predictions.}
   \label{fig:layerwise_ppl}
\end{figure}

\begin{table*}[ht!]
\centering
{\small
\begin{tabular}{l|l|l|llllll|l}
\toprule
Method & wbits & Wiki2 & ArcC & ArcE & PiQA & Wino & HellaS & Avg & MMLU \\
\midrule
FP16 & 16.00 & 5.607 & 51.28 & 81.52 & 80.03 & 73.72 & 60.01 & 69.31 & 65.35 \\ \hline 
AF & 3.25 & 8.056 & 43.94 & 75.25 & 77.53 & 69.38 & 52.91 & 63.80 & 53.15 \\
NF & 3.25 & 7.683 & 42.66 & 75.63 & 77.97 & 70.48 & 54.92 & 64.33 & 55.82 \\
HQQ & 3.25 & 7.317 & 43.17 & 76.14 & 78.24 & 68.98 & 55.37 & 64.38 & 56.39 \\
HIGGS (p=2) & 3.25 & 7.110 & 44.11 & 76.35 & 77.09 & 73.09 & 55.77 & 65.28 & 57.56 \\
HIGGS (p=3) & 3.25 & 6.807 & 44.71 & 77.95 & 77.75 & 71.11 & 57.01 & 65.71 & \bf{60.11} \\
HIGGS (p=4) & 3.25 & \bf{6.643} & 47.27 & 78.41 & 78.45 & 70.72 & 56.97 & \bf{66.36} & 59.88 \\ \hline 
\textcolor{gray}{GPTQ} & \textcolor{gray}{3.25} & \textcolor{gray}{7.133} & \textcolor{gray}{41.13} & \textcolor{gray}{72.81} & \textcolor{gray}{75.14} & \textcolor{gray}{71.51} & \textcolor{gray}{53.86} & \textcolor{gray}{62.89} & \textcolor{gray}{58.37} \\
HIGGS (dyn data-free) & 3.25 & \bf{6.388} & 47.10 & 79.12 & 78.78 & 71.59 & 57.09 & \bf{66.74} & \bf{61.62} \\ \hline 
AF & 4.02 & 6.194 & 46.84 & 78.54 & 79.16 & 73.95 & 58.28 & 67.35 & 61.47 \\
NF & 4.02 & 6.225 & 47.95 & 79.38 & 79.27 & 73.24 & 58.44 & 67.66 & 62.65 \\
HQQ & 4.02 & 8.057 & 46.84 & 78.16 & 77.91 & 70.17 & 55.44 & 65.70 & 57.72 \\
HIGGS (p=1) & 4.02 & 6.142 & 47.27 & 79.63 & 78.78 & 72.45 & 58.29 & 67.28 & 61.74 \\
HIGGS (p=2) & 4.02 & 6.015 & 48.29 & 81.06 & 79.54 & 73.95 & 58.54 & 68.28 & \bf{63.26} \\
HIGGS (p=3) & 4.02 & \bf{5.981} & 50.17 & 80.26 & 80.30 & 73.72 & 59.17 & \bf{68.73} & 62.83 \\ \hline 
\textcolor{gray}{GPTQ} & \textcolor{gray}{4.02} & \textcolor{gray}{6.238} & \textcolor{gray}{45.82} & \textcolor{gray}{78.66} & \textcolor{gray}{78.02} & \textcolor{gray}{72.53} & \textcolor{gray}{56.91} & \textcolor{gray}{66.39} & \textcolor{gray}{62.96} \\
HIGGS (dyn data-free) & 4.00 & \bf{5.910} & 49.23 & 80.98 & 79.38 & 72.85 & 59.00 & \bf{68.29} & \bf{63.86} \\ \hline 
AF & 4.25 & 5.952 & 49.57 & 80.85 & 79.27 & 74.27 & 59.13 & 68.62 & 63.20 \\
NF & 4.25 & 5.964 & 49.32 & 80.81 & 78.94 & 73.40 & 59.16 & 68.33 & 64.10 \\
HQQ & 4.25 & 5.944 & 50.09 & 81.44 & 79.76 & 73.88 & 59.44 & 68.92 & 63.70 \\
HIGGS (p=1) & 4.26 & 5.978 & 50.26 & 80.98 & 79.54 & 73.24 & 58.96 & 68.60 & 63.47 \\
HIGGS (p=2) & 4.26 & 5.908 & 50.60 & 81.48 & 79.38 & 74.19 & 59.17 & \bf{68.96} & 63.52 \\
HIGGS (p=3) & 4.25 & \bf{5.872} & 49.57 & 81.27 & 79.38 & 72.38 & 59.33 & 68.39 & \bf{64.24} \\ \hline 
\textcolor{gray}{GPTQ} & \textcolor{gray}{4.25} & \textcolor{gray}{5.923} & \textcolor{gray}{47.18} & \textcolor{gray}{79.59} & \textcolor{gray}{79.16} & \textcolor{gray}{72.22} & \textcolor{gray}{58.43} & \textcolor{gray}{67.32} & \textcolor{gray}{64.06} \\
HIGGS (dyn data-free) & 4.25 & \bf{5.831} & 50.43 & 81.27 & 79.43 & 72.85 & 59.33 & \bf{68.66} & \bf{64.06} \\
\bottomrule
\end{tabular}

}

\caption{Quantized Llama3.1 8B perplexity on WikiText-2~\citep{wikitext103}, accuracy on 5 zero-shot tasks~\cite{eval-harness}, average zero-shot accuracy, and 5-shot accuracy on MMLU. All quantization methods are data-free. Experiment configurations are described in Appendix~\ref{app:exp_configurations}.}
\label{tab:method_comparison}
\end{table*}

\section{Experiments}

\subsection{Error Model Validation}

One key point in the Theorem is that the result holds for \emph{sufficiently small} relative per-layer quantization errors $t_l$.
In this section, we seek to validate the fact that common bit-widths used in practice provide low enough compression error in order for the result to apply.
We conduct experiments for data-free weight quantization of the popular Llama 3.1 8B model with HIGGS quantization.
To evaluate the error model, we compare the predicted perplexity with the real perplexity of the quantized model.
For that, we uniformly quantize the model with different grid dimensions $p$ and grid sizes $n$.
We only use grids on pareto frontier of perplexity vs bitwidth with $1 \le p \le 5$ and $9 \le n \le 4096$.

Evaluated and predicted perplexities are shown in Figure~\ref{fig:perplexity-validation}.
We observe that the predicted perplexity is close to the real perplexity on relatively higher bitwidths ($b > 3.0$) and diverges on lower bitwidths, where the quantization error is higher.
Thus, we can use the error model in realistic bitwidth ranges.

\subsection{Methods Evaluation}

\paragraph{Methods.}
We compare HIGGS (Algorithm~\ref{alg:higgs}) with Normal Float (NF)~\citep{dettmers2023qloraefficientfinetuningquantized}, Abnormal Float (AF)~\citep{yoshida2023nf4isntinformationtheoretically} and HQQ~\citep{badri2023hqq}.
We provide detailed configurations and code sources for all the methods in Appendix~\ref{app:exp_configurations}. One important thing to note is that constant bitwidth HIGGS configurations were chosen to be as close to the default bitwidths of other method as possible with no regard for availability of kernels to run them. Dynamic bitwidth HIGGS results, however, limited the configurations to those mentioned in Section~\ref{sec:kernels}: FLUTE grids and CH8.

\paragraph{Models.}
We compare the aforementioned methods in application to quantization of models from the Llama 3~\citet{dubey2024llama3herdmodels} and Qwen2.5~\citep{bai2023qwen} families of models.
More specifically, we validate our findings on
Llama3.2 1B (Table~\ref{tab:llama3.2-1b}),
Llama3.2 3B (Table~\ref{tab:llama3.2-3b}),
Llama3.1 8B (Table~\ref{tab:method_comparison}),
Llama3.1 8B Instruct (Table~\ref{tab:llama3.1-8b-Instruct}),
Llama3.1 70B (Table~\ref{tab:llama3.1-70b}), and
Qwen2.5 7B (Table~\ref{tab:qwen2.5-7b}).

\paragraph{Metrics.} 
We report perplexity on WikiText-2~\cite{wikitext103} validation set. We measure zero-shot accuracy on WinoGrande~\cite{DBLP:journals/cacm/winogrande2021}, PiQA~\cite{tata2003piqa}, HellaSwag~\cite{DBLP:conf/acl/hellaswag2019}, ARC-easy and ARC-challenge~\cite{arc_allenai}, and report average zero-shot accuracy. We also report 5-shot accuracy on the MMLU~\citep{hendrycks2021measuringmassivemultitasklanguage} benchmark. Zero-shot and few-shot measurements are done via the LM Eval Harness~\cite{eval-harness}.

\begin{table}[h]
\centering
\caption{Comparison with 1-shot quantization methods for Llama 3.1 8B quantization.}
\label{tab:one_shot}
{\small
\begin{tabular}{l|c|c|c}
\toprule
Method                & wbits & Wiki2 & MMLU  \\ \midrule
FP16                  & 16    & 5.606 & 65.36 \\ \hline
GPTQ                  & 3.25  & 7.133 & 58.37 \\
HIGGS (dyn data-free) & 3.25  & 6.388 & 61.62 \\
HIGGS (dyn)           & 3.25  & 6.359 & 61.37 \\ \hline
GPTQ                  & 4.02  & 6.238 & 62.96 \\
AWQ                   & 4.02  & 6.228 & 62.88 \\
HIGGS (dyn data-free) & 4.00  & 5.910 & 63.86 \\
HIGGS (dyn)           & 4.00  & 5.870 & 63.69 \\ \hline
GPTQ                  & 4.25  & 5.923 & 64.05 \\
AWQ                   & 4.25  & 5.905 & 63.83 \\
HIGGS (dyn data-free) & 4.25  & 5.831 & 64.06 \\
HIGGS (dyn)           & 4.25  & 5.802 & 64.26 \\ \bottomrule
\end{tabular}
}
\end{table}

\paragraph{Uniform Bitwidths.}
We present the Llama 3.1 8B evaluations in Table~\ref{tab:method_comparison}. Evaluations for other models are present in Appendix~\ref{app:additional_evals}. For bitwidths around or below 4.0, We can see that HIGGS outperforms all existing $0$-shot compression methods even in the fixed-bitwidth applications.

\paragraph{Non-Uniform Bitwidths.}
We present the dynamic bitwidth results alongside the constant bitwidth results, in separate row groups in Tables~\ref{tab:method_comparison},~\ref{tab:llama3.2-1b}, ~\ref{tab:llama3.2-3b}, ~\ref{tab:llama3.1-8b-Instruct}, ~\ref{tab:llama3.1-70b}, and ~\ref{tab:qwen2.5-7b}.
More specifically, we present results for data-free dynamic bitwidth quantization expansion of our method (dyn data-free), described in detail in Section~\ref{sec:variable_bitwidth}.
For calibration we use $J=15$ noise values from linear theorem applicability range. We calibrate on 287k tokens from WikiText-2~\citep{wikitext103} train set.

\paragraph{Comparison With Data-Aware Methods.}
Additionally, we compare our data-free dynamic bitwidth HIGGS method with popular data-aware 1-shot quantization methods: GPTQ~\citep{frantar2022gptq} and AWQ~\citep{lin2023awq}. Alongside data-free method (dyn data-free), we present results for method calibrated on WikiText-2 PPL itself (dyn Wiki2). The results, shown in Table~\ref{tab:one_shot}, indicate that HIGGS consistently outperforms those quantization methods as well. Moreover, we observe little difference in few-shot performance between data-free and data-dependent method.

\section{Conclusions}

We have presented a new result relating the per-layer quantization error with the model's global error, and have applied this result to two problems in LLM quantization: accurate data-free quantization and  optimal non-uniform compression. Our approach leads to state-of-the-art performance for data-free quantization, and is compatible with efficient runtimes~\cite{bitsandbytes, guo2024fastmatrixmultiplicationslookup}. 
Remarkably, we observe that our approach is robust to being made completely data-free via random sampling; moreover, it appears to outperform popular calibration-based methods in the 3-4 bits/parameter range.



\section{Limitations}

One direction for improvement is validating the approach across several model architecture types (e.g. Mixture-of-Experts). However, we believe our result should be generalizable, as the quantization approach used is model-independent. One other limitation is the requirement to use Hadamard transforms for weight incoherence, which may add runtime overheads in some cases. However, it is known~\citep{chee2023quip} that these runtimes can be minimized, or that the corresponding matrices can even be eliminated by ``folding them into''  the previous layer~\citep{ashkboos2024quarotoutlierfree4bitinference}. We aim to investigate this in future work, as further enhancements to existing kernels such as FLUTE~\citep{guo2024fastmatrixmultiplicationslookup}.

\bibliography{custom}

\begin{thebibliography}{49}
\providecommand{\natexlab}[1]{#1}

\bibitem[{Ailon and Chazelle(2009)}]{doi:10.1137/060673096}
Nir Ailon and Bernard Chazelle. 2009.
\newblock \href {https://doi.org/10.1137/060673096} {The fast johnson–lindenstrauss transform and approximate nearest neighbors}.
\newblock \emph{SIAM Journal on Computing}, 39(1):302--322.

\bibitem[{Ashkboos et~al.(2024)Ashkboos, Mohtashami, Croci, Li, Jaggi, Alistarh, Hoefler, and Hensman}]{ashkboos2024quarotoutlierfree4bitinference}
Saleh Ashkboos, Amirkeivan Mohtashami, Maximilian~L. Croci, Bo~Li, Martin Jaggi, Dan Alistarh, Torsten Hoefler, and James Hensman. 2024.
\newblock \href {https://arxiv.org/abs/2404.00456} {Quarot: Outlier-free 4-bit inference in rotated llms}.
\newblock \emph{Preprint}, arXiv:2404.00456.

\bibitem[{Badri and Shaji(2023)}]{badri2023hqq}
Hicham Badri and Appu Shaji. 2023.
\newblock \href {https://mobiusml.github.io/hqq_blog/} {Half-quadratic quantization of large machine learning models}.

\bibitem[{Bai et~al.(2023)Bai, Bai, Chu, Cui, Dang, Deng, Fan, Ge, Han, Huang et~al.}]{bai2023qwen}
Jinze Bai, Shuai Bai, Yunfei Chu, Zeyu Cui, Kai Dang, Xiaodong Deng, Yang Fan, Wenbin Ge, Yu~Han, Fei Huang, et~al. 2023.
\newblock Qwen technical report.
\newblock \emph{arXiv preprint arXiv:2309.16609}.

\bibitem[{Chee et~al.(2023)Chee, Cai, Kuleshov, and Sa}]{chee2023quip}
Jerry Chee, Yaohui Cai, Volodymyr Kuleshov, and Christopher~De Sa. 2023.
\newblock \href {https://arxiv.org/abs/2307.13304} {Quip: 2-bit quantization of large language models with guarantees}.
\newblock \emph{Preprint}, arXiv:2307.13304.

\bibitem[{Clark et~al.(2018)Clark, Cowhey, Etzioni, Khot, Sabharwal, Schoenick, and Tafjord}]{arc_allenai}
Peter Clark, Isaac Cowhey, Oren Etzioni, Tushar Khot, Ashish Sabharwal, Carissa Schoenick, and Oyvind Tafjord. 2018.
\newblock Think you have solved question answering? try arc, the ai2 reasoning challenge.
\newblock \emph{arXiv preprint arXiv:1803.05457}.

\bibitem[{Davies et~al.(2020)Davies, Gurunathan, Moshrefi, Ashkboos, and Alistarh}]{davies2020new}
Peter Davies, Vijaykrishna Gurunathan, Niusha Moshrefi, Saleh Ashkboos, and Dan Alistarh. 2020.
\newblock New bounds for distributed mean estimation and variance reduction.
\newblock \emph{arXiv preprint arXiv:2002.09268}.

\bibitem[{Dettmers et~al.(2022)Dettmers, Lewis, Belkada, and Zettlemoyer}]{dettmers2022llm}
Tim Dettmers, Mike Lewis, Younes Belkada, and Luke Zettlemoyer. 2022.
\newblock {LLM}.int8(): 8-bit matrix multiplication for transformers at scale.
\newblock \emph{Advances in Neural Information Processing Systems 35: Annual Conference on Neural Information Processing Systems 2022, NeurIPS 2022}.

\bibitem[{Dettmers et~al.(2023{\natexlab{a}})Dettmers, Pagnoni, Holtzman, and Zettlemoyer}]{dettmers2023qloraefficientfinetuningquantized}
Tim Dettmers, Artidoro Pagnoni, Ari Holtzman, and Luke Zettlemoyer. 2023{\natexlab{a}}.
\newblock \href {https://arxiv.org/abs/2305.14314} {Qlora: Efficient finetuning of quantized llms}.
\newblock \emph{Preprint}, arXiv:2305.14314.

\bibitem[{Dettmers et~al.(2023{\natexlab{b}})Dettmers, Svirschevski, Egiazarian, Kuznedelev, Frantar, Ashkboos, Borzunov, Hoefler, and Alistarh}]{dettmers2023spqr}
Tim Dettmers, Ruslan Svirschevski, Vage Egiazarian, Denis Kuznedelev, Elias Frantar, Saleh Ashkboos, Alexander Borzunov, Torsten Hoefler, and Dan Alistarh. 2023{\natexlab{b}}.
\newblock {SpQR}: A sparse-quantized representation for near-lossless llm weight compression.
\newblock \emph{arXiv preprint arXiv:2306.03078}.

\bibitem[{Dettmers and von Koeller(2021)}]{bitsandbytes}
Tim Dettmers and Titus von Koeller. 2021.
\newblock \href {https://github.com/TimDettmers/bitsandbytes} {Accessible large language models via k-bit quantization for pytorch.}

\bibitem[{Dettmers and Zettlemoyer(2022)}]{dettmers2022case}
Tim Dettmers and Luke Zettlemoyer. 2022.
\newblock The case for 4-bit precision: k-bit inference scaling laws.
\newblock \emph{arXiv preprint arXiv:2212.09720}.

\bibitem[{Dubey et~al.(2024)Dubey, Jauhri, Pandey, Kadian, and et~al.}]{dubey2024llama3herdmodels}
Abhimanyu Dubey, Abhinav Jauhri, Abhinav Pandey, Abhishek Kadian, and et~al. 2024.
\newblock \href {https://arxiv.org/abs/2407.21783} {The {Llama 3} herd of models}.
\newblock \emph{Preprint}, arXiv:2407.21783.

\bibitem[{Egiazarian et~al.(2024)Egiazarian, Panferov, Kuznedelev, Frantar, Babenko, and Alistarh}]{egiazarian2024extreme}
Vage Egiazarian, Andrei Panferov, Denis Kuznedelev, Elias Frantar, Artem Babenko, and Dan Alistarh. 2024.
\newblock Extreme compression of large language models via additive quantization.
\newblock \emph{arXiv preprint arXiv:2401.06118}.

\bibitem[{Frantar and Alistarh(2022)}]{frantar2022spdy}
Elias Frantar and Dan Alistarh. 2022.
\newblock {SPDY:} {A}ccurate pruning with speedup guarantees.
\newblock \emph{arXiv preprint arXiv:2201.13096}.

\bibitem[{Frantar et~al.(2022)Frantar, Ashkboos, Hoefler, and Alistarh}]{frantar2022gptq}
Elias Frantar, Saleh Ashkboos, Torsten Hoefler, and Dan Alistarh. 2022.
\newblock Gptq: Accurate post-training quantization for generative pre-trained transformers.
\newblock \emph{arXiv preprint arXiv:2210.17323}.

\bibitem[{Frantar et~al.(2024)Frantar, Castro, Chen, Hoefler, and Alistarh}]{frantar2024marlinmixedprecisionautoregressiveparallel}
Elias Frantar, Roberto~L. Castro, Jiale Chen, Torsten Hoefler, and Dan Alistarh. 2024.
\newblock \href {https://arxiv.org/abs/2408.11743} {Marlin: Mixed-precision auto-regressive parallel inference on large language models}.
\newblock \emph{Preprint}, arXiv:2408.11743.

\bibitem[{Gao et~al.(2021)Gao, Tow, Biderman, Black, DiPofi, Foster, Golding, Hsu, McDonell, Muennighoff, Phang, Reynolds, Tang, Thite, Wang, Wang, and Zou}]{eval-harness}
Leo Gao, Jonathan Tow, Stella Biderman, Sid Black, Anthony DiPofi, Charles Foster, Laurence Golding, Jeffrey Hsu, Kyle McDonell, Niklas Muennighoff, Jason Phang, Laria Reynolds, Eric Tang, Anish Thite, Ben Wang, Kevin Wang, and Andy Zou. 2021.
\newblock \href {https://doi.org/10.5281/zenodo.5371628} {A framework for few-shot language model evaluation}.

\bibitem[{Gholami et~al.(2021)Gholami, Kim, Dong, Yao, Mahoney, and Keutzer}]{gholami2021survey}
Amir Gholami, Sehoon Kim, Zhen Dong, Zhewei Yao, Michael~W Mahoney, and Kurt Keutzer. 2021.
\newblock A survey of quantization methods for efficient neural network inference.
\newblock \emph{arXiv preprint arXiv:2103.13630}.

\bibitem[{Guo et~al.(2024)Guo, Brandon, Cholakov, Ragan-Kelley, Xing, and Kim}]{guo2024fastmatrixmultiplicationslookup}
Han Guo, William Brandon, Radostin Cholakov, Jonathan Ragan-Kelley, Eric~P. Xing, and Yoon Kim. 2024.
\newblock \href {https://arxiv.org/abs/2407.10960} {Fast matrix multiplications for lookup table-quantized llms}.
\newblock \emph{Preprint}, arXiv:2407.10960.

\bibitem[{Hendrycks et~al.(2021)Hendrycks, Burns, Basart, Zou, Mazeika, Song, and Steinhardt}]{hendrycks2021measuringmassivemultitasklanguage}
Dan Hendrycks, Collin Burns, Steven Basart, Andy Zou, Mantas Mazeika, Dawn Song, and Jacob Steinhardt. 2021.
\newblock \href {https://arxiv.org/abs/2009.03300} {Measuring massive multitask language understanding}.
\newblock \emph{Preprint}, arXiv:2009.03300.

\bibitem[{Kim et~al.(2023)Kim, Hooper, Gholami, Dong, Li, Shen, Mahoney, and Keutzer}]{kim2023squeezellm}
Sehoon Kim, Coleman Hooper, Amir Gholami, Zhen Dong, Xiuyu Li, Sheng Shen, Michael~W Mahoney, and Kurt Keutzer. 2023.
\newblock Squeezellm: Dense-and-sparse quantization.
\newblock \emph{arXiv preprint arXiv:2306.07629}.

\bibitem[{Kwon et~al.(2023)Kwon, Li, Zhuang, Sheng, Zheng, Yu, Gonzalez, Zhang, and Stoica}]{kwon2023efficient}
Woosuk Kwon, Zhuohan Li, Siyuan Zhuang, Ying Sheng, Lianmin Zheng, Cody~Hao Yu, Joseph~E. Gonzalez, Hao Zhang, and Ion Stoica. 2023.
\newblock Efficient memory management for large language model serving with pagedattention.
\newblock In \emph{Proceedings of the ACM SIGOPS 29th Symposium on Operating Systems Principles}.

\bibitem[{Lee et~al.(2024)Lee, Jin, Kim, Kim, and Park}]{lee2024owq}
Changhun Lee, Jungyu Jin, Taesu Kim, Hyungjun Kim, and Eunhyeok Park. 2024.
\newblock \href {https://arxiv.org/abs/2306.02272} {Owq: Outlier-aware weight quantization for efficient fine-tuning and inference of large language models}.
\newblock \emph{Preprint}, arXiv:2306.02272.

\bibitem[{Lin et~al.(2023)Lin, Tang, Tang, Yang, Dang, and Han}]{lin2023awq}
Ji~Lin, Jiaming Tang, Haotian Tang, Shang Yang, Xingyu Dang, and Song Han. 2023.
\newblock Awq: Activation-aware weight quantization for llm compression and acceleration.
\newblock \emph{arXiv preprint arXiv:2306.00978}.

\bibitem[{Liu et~al.(2024)Liu, Zhao, Fedorov, Soran, Choudhary, Krishnamoorthi, Chandra, Tian, and Blankevoort}]{liu2024spinquantllmquantizationlearned}
Zechun Liu, Changsheng Zhao, Igor Fedorov, Bilge Soran, Dhruv Choudhary, Raghuraman Krishnamoorthi, Vikas Chandra, Yuandong Tian, and Tijmen Blankevoort. 2024.
\newblock \href {https://arxiv.org/abs/2405.16406} {Spinquant: Llm quantization with learned rotations}.
\newblock \emph{Preprint}, arXiv:2405.16406.

\bibitem[{Merity et~al.(2016{\natexlab{a}})Merity, Xiong, Bradbury, and Socher}]{merity2016pointersentinelmixturemodels}
Stephen Merity, Caiming Xiong, James Bradbury, and Richard Socher. 2016{\natexlab{a}}.
\newblock \href {https://arxiv.org/abs/1609.07843} {Pointer sentinel mixture models}.
\newblock \emph{Preprint}, arXiv:1609.07843.

\bibitem[{Merity et~al.(2016{\natexlab{b}})Merity, Xiong, Bradbury, and Socher}]{wikitext103}
Stephen Merity, Caiming Xiong, James Bradbury, and Richard Socher. 2016{\natexlab{b}}.
\newblock Pointer sentinel mixture models.
\newblock \emph{arXiv preprint arXiv:1609.07843}.

\bibitem[{Nagel et~al.(2020)Nagel, Amjad, Van~Baalen, Louizos, and Blankevoort}]{nagel2020up}
Markus Nagel, Rana~Ali Amjad, Mart Van~Baalen, Christos Louizos, and Tijmen Blankevoort. 2020.
\newblock Up or down? {A}daptive rounding for post-training quantization.
\newblock In \emph{International Conference on Machine Learning (ICML)}.

\bibitem[{Pag{\`e}s and Printems(2003)}]{GaussianCase}
Gilles Pag{\`e}s and Jacques Printems. 2003.
\newblock Optimal quadratic quantization for numerics: the gaussian case.
\newblock \emph{Monte Carlo Methods and Applications}, 9:135{\textendash}166.

\bibitem[{Park et~al.(2022)Park, Park, Kwon, Kim, Lee, and Lee}]{park2022nuqmm}
Gunho Park, Baeseong Park, Se~Jung Kwon, Byeongwook Kim, Youngjoo Lee, and Dongsoo Lee. 2022.
\newblock {nuQmm}: Quantized matmul for efficient inference of large-scale generative language models.
\newblock \emph{arXiv preprint arXiv:2206.09557}.

\bibitem[{Paszke et~al.(2019)Paszke, Gross, Massa, Lerer, Bradbury, Chanan, Killeen, Lin, Gimelshein, Antiga et~al.}]{paszke2019pytorch}
Adam Paszke, Sam Gross, Francisco Massa, Adam Lerer, James Bradbury, Gregory Chanan, Trevor Killeen, Zeming Lin, Natalia Gimelshein, Luca Antiga, et~al. 2019.
\newblock Pytorch: An imperative style, high-performance deep learning library.
\newblock \emph{Advances in neural information processing systems}, 32.

\bibitem[{Perron and Didier(2024)}]{cpsatlp}
Laurent Perron and Frédéric Didier. 2024.
\newblock \href {https://developers.google.com/optimization/cp/cp_solver/} {Cp-sat}.

\bibitem[{Sakaguchi et~al.(2021)Sakaguchi, Bras, Bhagavatula, and Choi}]{DBLP:journals/cacm/winogrande2021}
Keisuke Sakaguchi, Ronan~Le Bras, Chandra Bhagavatula, and Yejin Choi. 2021.
\newblock \href {https://doi.org/10.1145/3474381} {Winogrande: an adversarial winograd schema challenge at scale}.
\newblock \emph{Commun. {ACM}}, 64(9):99--106.

\bibitem[{Suresh et~al.(2017)Suresh, Yu, Kumar, and McMahan}]{suresh2017distributedmeanestimationlimited}
Ananda~Theertha Suresh, Felix~X. Yu, Sanjiv Kumar, and H.~Brendan McMahan. 2017.
\newblock \href {https://arxiv.org/abs/1611.00429} {Distributed mean estimation with limited communication}.
\newblock \emph{Preprint}, arXiv:1611.00429.

\bibitem[{Tata and Patel(2003)}]{tata2003piqa}
Sandeep Tata and Jignesh~M Patel. 2003.
\newblock {PiQA}: An algebra for querying protein data sets.
\newblock In \emph{International Conference on Scientific and Statistical Database Management}.

\bibitem[{torchao maintainers and contributors(2024)}]{torchao}
torchao maintainers and contributors. 2024.
\newblock \href {https://github.com/pytorch/torchao} {torchao: Pytorch native quantization and sparsity for training and inference}.

\bibitem[{Tseng et~al.(2024{\natexlab{a}})Tseng, Chee, Sun, Kuleshov, and Sa}]{tseng2024quipbetterllmquantization}
Albert Tseng, Jerry Chee, Qingyao Sun, Volodymyr Kuleshov, and Christopher~De Sa. 2024{\natexlab{a}}.
\newblock \href {https://arxiv.org/abs/2402.04396} {Quip\#: Even better llm quantization with hadamard incoherence and lattice codebooks}.
\newblock \emph{Preprint}, arXiv:2402.04396.

\bibitem[{Tseng et~al.(2024{\natexlab{b}})Tseng, Sun, Hou, and De~Sa}]{tseng2024qtip}
Albert Tseng, Qingyao Sun, David Hou, and Christopher De~Sa. 2024{\natexlab{b}}.
\newblock Qtip: Quantization with trellises and incoherence processing.
\newblock \emph{arXiv preprint arXiv:2406.11235}.

\bibitem[{van Baalen et~al.(2024)van Baalen, Kuzmin, Nagel, Couperus, Bastoul, Mahurin, Blankevoort, and Whatmough}]{van2024gptvq}
Mart van Baalen, Andrey Kuzmin, Markus Nagel, Peter Couperus, Cedric Bastoul, Eric Mahurin, Tijmen Blankevoort, and Paul Whatmough. 2024.
\newblock Gptvq: The blessing of dimensionality for llm quantization.
\newblock \emph{arXiv preprint arXiv:2402.15319}.

\bibitem[{Vargaftik et~al.(2021)Vargaftik, Basat, Portnoy, Mendelson, Ben-Itzhak, and Mitzenmacher}]{vargaftik2021driveonebitdistributedmean}
Shay Vargaftik, Ran~Ben Basat, Amit Portnoy, Gal Mendelson, Yaniv Ben-Itzhak, and Michael Mitzenmacher. 2021.
\newblock \href {https://arxiv.org/abs/2105.08339} {Drive: One-bit distributed mean estimation}.
\newblock \emph{Preprint}, arXiv:2105.08339.

\bibitem[{Vargaftik et~al.(2022)Vargaftik, Basat, Portnoy, Mendelson, Ben-Itzhak, and Mitzenmacher}]{vargaftik2022edencommunicationefficientrobustdistributed}
Shay Vargaftik, Ran~Ben Basat, Amit Portnoy, Gal Mendelson, Yaniv Ben-Itzhak, and Michael Mitzenmacher. 2022.
\newblock \href {https://arxiv.org/abs/2108.08842} {Eden: Communication-efficient and robust distributed mean estimation for federated learning}.
\newblock \emph{Preprint}, arXiv:2108.08842.

\bibitem[{Wang et~al.(2024)Wang, Ma, Cao, Zhang, Xue, Shi, Zheng, Miao, Yang, Cao, Yang, and Yang}]{ladder-osdi24}
Lei Wang, Lingxiao Ma, Shijie Cao, Quanlu Zhang, Jilong Xue, Yining Shi, Ningxin Zheng, Ziming Miao, Fan Yang, Ting Cao, Yuqing Yang, and Mao Yang. 2024.
\newblock \href {https://www.usenix.org/conference/osdi24/presentation/wang-lei} {Ladder: Enabling efficient low-precision deep learning computing through hardware-aware tensor transformation}.
\newblock In \emph{18th USENIX Symposium on Operating Systems Design and Implementation (OSDI 24)}, pages 307--323, Santa Clara, CA. USENIX Association.

\bibitem[{Xia et~al.(2024)Xia, Zheng, Wu, Chen, Yao, Youn, Bakhtiari, Wyatt, Zhuang, Zhou et~al.}]{xia2024fp6}
Haojun Xia, Zhen Zheng, Xiaoxia Wu, Shiyang Chen, Zhewei Yao, Stephen Youn, Arash Bakhtiari, Michael Wyatt, Donglin Zhuang, Zhongzhu Zhou, et~al. 2024.
\newblock Fp6-llm: Efficiently serving large language models through fp6-centric algorithm-system co-design.
\newblock \emph{arXiv preprint arXiv:2401.14112}.

\bibitem[{Yao et~al.(2022)Yao, Aminabadi, Zhang, Wu, Li, and He}]{yao2022zeroquant}
Zhewei Yao, Reza~Yazdani Aminabadi, Minjia Zhang, Xiaoxia Wu, Conglong Li, and Yuxiong He. 2022.
\newblock Zeroquant: Efficient and affordable post-training quantization for large-scale transformers.
\newblock \emph{arXiv preprint arXiv:2206.01861}.

\bibitem[{Yin et~al.(2024)Yin, Wu, Zhang, Hsieh, Wang, Jia, Li, Jaiswal, Pechenizkiy, Liang, Bendersky, Wang, and Liu}]{owl}
Lu~Yin, You Wu, Zhenyu Zhang, Cheng-Yu Hsieh, Yaqing Wang, Yiling Jia, Gen Li, Ajay Jaiswal, Mykola Pechenizkiy, Yi~Liang, Michael Bendersky, Zhangyang Wang, and Shiwei Liu. 2024.
\newblock \href {https://arxiv.org/abs/2310.05175} {Outlier weighed layerwise sparsity (owl): A missing secret sauce for pruning llms to high sparsity}.
\newblock \emph{Preprint}, arXiv:2310.05175.

\bibitem[{Yoshida(2023)}]{yoshida2023nf4isntinformationtheoretically}
Davis Yoshida. 2023.
\newblock \href {https://arxiv.org/abs/2306.06965} {Nf4 isn't information theoretically optimal (and that's good)}.
\newblock \emph{Preprint}, arXiv:2306.06965.

\bibitem[{Zellers et~al.(2019)Zellers, Holtzman, Bisk, Farhadi, and Choi}]{DBLP:conf/acl/hellaswag2019}
Rowan Zellers, Ari Holtzman, Yonatan Bisk, Ali Farhadi, and Yejin Choi. 2019.
\newblock \href {https://doi.org/10.18653/v1/p19-1472} {Hellaswag: Can a machine really finish your sentence?}
\newblock In \emph{Proceedings of the 57th Conference of the Association for Computational Linguistics, {ACL} 2019, Florence, Italy, July 28- August 2, 2019, Volume 1: Long Papers}, pages 4791--4800. Association for Computational Linguistics.

\bibitem[{Zhang et~al.(2022)Zhang, Roller, Goyal, Artetxe, Chen, Chen, Dewan, Diab, Li, Lin et~al.}]{zhang2022opt}
Susan Zhang, Stephen Roller, Naman Goyal, Mikel Artetxe, Moya Chen, Shuohui Chen, Christopher Dewan, Mona Diab, Xian Li, Xi~Victoria Lin, et~al. 2022.
\newblock Opt: Open pre-trained transformer language models.
\newblock \emph{arXiv preprint arXiv:2205.01068}.

\end{thebibliography}

\newpage
\onecolumn
\appendix
\section{Table of Notation}

\begin{table}[!h]
\centering
\footnotesize
\begin{tabular}{|c|l|c|}
\hline
\bf Notation & \bf Meaning & \bf Reference \\
\hline
$L$ & number of matrices the entries of which we want to quantize & \\
$W_l$ & matrix of floats corresponding to layer $l \in \{1,\dots,L\}$ & \\
$d_{in}^l \times d_{out}^l$ & dimensions of matrix $W_l$ & \\
$d_l$ & $=d_{in}^l \cdot d_{out}^l$ &  \\
$\cC_l$ & a (possibly randomized) compression mapping used to compress $W_l$ & \\  
$\cG_l$ & Gaussian noise insertion (a special type of compressor $\cC_l$) & (\ref{eq:GNI}) \\
$\widehat{W}_l$ &  $=\cC_l(W_l)$; compressed version of matrix $W_l$ & \\  
$W$   & $= (W_1,\dots,W_L)$ & \\
$W^\star$   & $= (W_1^\star,\dots,W_L^\star)$; weights of a pre-trained model & \\
$PPL(W)$ & the perplexity of the model associated with weights $W$ & \\
$d$ & $=\sum_{l=1}^L d_{in}^l \cdot d_{out}^l$; the total number of floats in the model we want to quantize & \\
$\mathcal{G}_n^p$ & a grid: a collection of $n$ vectors in $\R^p$ used for quantization by rounding to the nearest & \\
\hline
\end{tabular}
\caption{Selected frequently used notation.}
\end{table}

\section{Compression of Linear Layers}



\subsection{Compressing linear layers} Let $\cC_l: \R^{d^l_{in} \times d^l_{out}} \to \R^{d^l_{in} \times d^l_{out}}$ be a (possibly randomized) compression (e.g., sparsification and/or quantization) mechanism and let  $\widehat{W}_l \eqdef \cC_l(W_l) \in \R^{d^l_{in} \times d^l_{out}}$ represent a compressed/quantized version of $W_l$. Further, let
\begin{equation} \label{eq:t_l} t_l^2 = t_l^2(W_l,\cC_l) \eqdef \frac{ \Exp{\|\cC_l(W_l)-W_l\|_F^2 }}{\|W_l\|_F^2}, \quad l=1,\dots,L. \end{equation} 
That is, \begin{equation}\label{eq:t_l-second} \Exp{ \|\widehat{W}_l-W_l\|_F^2 } = t_l^2(W_l,\cC_l) \|W_l\|_F^2, \quad l=1,\dots,L.\end{equation}
 Here we remark that the highly studied class of {\em contractive} compressors is characterized by the inequality
\begin{equation} \label{eq:contractive} \Exp{ \|\cC_l(W_l)-W_l\|_F^2 } \leq (1-\alpha_l) \|W_l\|_F^2, \quad \forall W_l \in \R^{d^l_{in} \times d^l_{out}}, \quad \forall l \in \{1,\dots,L\},\end{equation}
which is assumed to hold for some $0<\alpha_l \leq 1$. If we apply such a contractive compressor to $W_l$, we get
\[t_l^2(W_l,\cC_l) \|W_l\|_F^2  \overset{\Cref{eq:t_l-second}}{=} \Exp{ \|\cC_l(W_l)-W_l\|_F^2 } \overset{\Cref{eq:contractive}}{\leq} (1-\alpha_l) \|W_l\|_F^2,\]
which means that $$t_l^2(W_l,\cC_l) \leq 1-\alpha_t.$$ In other words, unlike the contraction factor $1-\alpha_t$, which needs to hold universally for all matrices $W_l \in  \R^{d^l_{in} \times d^l_{out}}$, $t_l^2(W_l,\cC_l)$ is an ``instantaneous''  contraction factor corresponding to $W_l$ only, and as such, is always not worse, and typically much better.

\subsection{Gaussian noise insertion}\label{app:gaussian-noise-insertion}

We now describe a synthetic noise insertion procedure whose role is to mimic the error due to compression. Given a constant $t>0$, define \begin{equation}\label{eq:GNI} \cG_l(W_l,t) \eqdef  W_l + \frac{t \|W_l\|_F}{\sqrt{ d^l_{in} \cdot d^l_{out}}} \Sigma_l, \quad l=1,\dots,L, \end{equation} where the entries of $\Sigma_l \in \R^{d^l_{in} \times d^l_{out}}$ are all i.i.d.\ Gaussians with zero mean and unit variance. This means that the entries of $\cG_l(W_l,t)$ are also Gaussians, with mean determined by the entries of $W_l$, and variance $\sigma_l^2 \eqdef t^2 \|W_l\|_F^2$. This implies that
\begin{equation} \Exp{ \| \cG_l(W_l,t) - W_l\|_F^2 } = t^2 \|W_l\|_F^2,\quad l=1,\dots,L.\end{equation}

Comparing this to (\ref{eq:t_l-second}), we see that
$$t_l^2(W_l,\cG_l(\cdot,t)) = t^2.$$

\section{Empirical Approximation of Perplexity Around the Pre-trained Model} 

Let $PPL(W)$ denote the perplexity of the model corresponding to weights $W$, and let $W^\star$ be the weights corresponding to a pre-trained model. We have made the following (perhaps surprising!) observation through numerical experimentation: if we replace the pre-trained weights $W_l^\star$ of the $l^{\rm th}$ layer by $\widehat{W}_l = \cC_l(W^\star_l)$, where $\cC_l$ is a contractive compressor, then
\begin{equation} PPL(W_1^\star,\dots,W_{l-1}^\star, \widehat{W}_l, W_{l+1}^\star, \dots , W_L^\star) \approx PPL(W^\star) + \alpha_l t_l^2 \end{equation}
for all $t_l \in [0,\bar{t}_l]$, where $\bar{t}_l$ is ``small enough''\footnote{Add some comment on how small it is.}, and for some coefficient $\alpha_l>0$ computed via linear regression. If we specialize $$\widehat{W}_l=\cC_l(W_l^\star)\eqdef \cG_l(W_l^\star,t),$$ then the model $W^\star(l,t)$ described in Section~\ref{sec:variable_bitwidth} and Algorithm~\ref{alg:ecc} is defined as
\begin{equation} W^\star(l,t) \eqdef (W_1^\star,\dots,W_{l-1}^\star, \cG_l(W_l^\star, t), W_{l+1}^\star, \dots , W_L^\star).\label{eq:889900}\end{equation}

Moreover, a stronger experimental observation was made: if we replace the pre-trained weights of {\em all layers} by $\widehat{W}_l = \cC_l(W^\star_l)$, where $\cC_l$ is a contractive compressor, then
\begin{equation} PPL(\widehat{W}) \approx PPL(W^\star) + \sum_{l=1}^L \alpha_l t_l^2 \end{equation}
for $t_l \in [0,\bar{t}_l]$, where $\bar{t}_l$ is ``small enough'', and for some coefficient $\alpha_l>0$ computed via linear regression. 

This latter observation is formalized as Theorem~\ref{main:ppl-theorem}, and proved in the next section.

\section{Theoretical Approximation of Perplexity Around the Pre-trained Model}\label{app:ppl-theorem}

\subsection{Proof of Theorem~\ref{main:ppl-theorem}}
\begin{proof}
The proof follows from the material included in Section~ \ref{sec:D3}. The material in Sections~\ref{sec:D1} and \ref{sec:D2} provides a simplified treatment, and is included for clarity/pedagogical reasons.    
\end{proof}

\subsection{Approximating the mean of a smooth function perturbed around a local minimizer: univariate case} \label{sec:D1} 

Consider a sufficiently smooth function
 $\phi:\R\to \R$. Let $w^\star \in \R$ be a local minimizer of $\phi$, whence $\phi'(w^\star)=0$. Let  $\xi$ be any random variable with finite second moment: $$M_2 \eqdef \Exp{\xi^2} < +\infty.$$ From Taylor's approximation of $\phi$ around $w^\star$, we get
\begin{eqnarray*}\phi(w^\star + t |w^\star| \xi)  & \approx &  \phi(w^\star) +  \phi'(w^\star) t |w^\star| \xi + \frac{1}{2} \phi''(w^\star) t^2 |w^\star|^2 \xi^2  \\
&=& \phi(w^\star)  + \frac{1}{2} \phi''(w^\star) t^2 |w^\star|^2 \xi^2.
\end{eqnarray*}
Taking expectation on both sides, we get
\begin{eqnarray*} \Exp{\phi(w^\star + t |w^\star| \xi)} & \approx &  \Exp{\phi(w^\star)  + \frac{1}{2} \phi''(w^\star) t^2 |w^\star|^2 \xi^2} \\
&=&\phi(w^\star)  + \frac{1}{2} \phi''(w^\star) t^2 |w^\star|^2  \Exp{\xi^2} \\
&=&\phi(w^\star)  + \frac{1}{2} \phi''(w^\star) t^2 |w^\star|^2 \\
&= &\phi(w^\star) + \alpha t^2 M_2,
\end{eqnarray*}
where $\alpha \eqdef \frac{1}{2} \phi''(w^\star) |w^\star|^2 M_2 $.

Let us now make above approximation more precise. In order to do so, we will rely on two additional assumptions: 
\begin{itemize}
\item [(i)] the third derivative of $\phi$ is bounded by $B_3>0$: $|\phi'''(t) | \leq B_3$ for all $t\in \R$;
\item [(ii)] the third moment of $|\xi|$ is finite: $M_3 \eqdef \Exp{|\xi|^3}  < +\infty$.
\end{itemize}

Under these assumptions, there exists $\theta_\xi$ on the interval  defined by $w^\star$ and $w^\star + t |w^\star| \xi$ such that
\begin{eqnarray*}\phi(w^\star + t |w^\star| \xi) &=&  \phi(w^\star) +  \phi'(w^\star) t |w^\star| \xi + \frac{1}{2} \phi''(w^\star) t^2 |w^\star|^2 \xi^2  + \frac{1}{6}\phi'''(\theta_\xi) t^3 |w^\star|^3 \xi^3 \\
&=& \phi(w^\star) + \frac{1}{2} \phi''(w^\star) t^2 |w^\star|^2 \xi^2  + \frac{1}{6}\phi'''(\theta_\xi) t^3 |w^\star|^3 \xi^3. 
\end{eqnarray*}

By taking expectation on both sides, we get
$$\Exp{\phi(w^\star + t |w^\star| \xi)} = \phi(w^\star) + \frac{1}{2} \phi''(w^\star) t^2 |w^\star|^2 M_2  + \Exp{\frac{1}{6}\phi'''(\theta_\xi) t^3 |w^\star|^3 \xi^3},$$
from which we get the estimate
\begin{eqnarray*}
\left| \Exp{\phi(w^\star + t |w^\star| \xi)} - \left( \phi(w^\star) + \frac{1}{2} \phi''(w^\star) t^2 |w^\star|^2 M_2 \right) \right| &\leq & \left|\Exp{\frac{1}{6}\phi'''(\theta_\xi) t^3 |w^\star|^3 \xi^3} \right| \\
&\leq &\Exp{\left|\frac{1}{6}\phi'''(\theta_\xi) t^3 |w^\star|^3 \xi^3\right|}  \\
&=&\frac{1}{6} |t|^3  |w^\star|^3 \Exp{\left|\phi'''(\theta_\xi)\right|   |\xi|^3}  \\
&\leq& \frac{1}{6} |t|^3  |w^\star|^3 B_3 M_3. 
\end{eqnarray*}

\subsection{Approximating the mean of a smooth function perturbed around a local minimizer: multivariate case}\label{sec:D2} 

Let's now consider the multivariate case.  Consider a sufficiently smooth function
 $\phi:\R^d\to \R$. Let ${\bf w}^\star \in \R^d$ be a local minimizer of $\phi$, whence $\nabla \phi({\bf w}^\star)=0$. Let  $\xi =(\xi_1,\dots,\xi_d)$ be a random vector such that the second moment $$M_{2,i} \eqdef \Exp{\xi_i^2}$$ is finite for all $i\in \{1,\dots,d\}$. Let $T = Diag(t_1 , \dots, t_d )$ and $D = Diag(|{\bf w}_1^\star|, \dots,  |{\bf w}_d^\star|)$, where $t_1,\dots,t_d>0$.  From Taylor's approximation of $\phi$ around ${\bf w}^\star$, we get
\begin{eqnarray*}\phi({\bf w}^\star +  D T \xi)  & \approx &  \phi({\bf w}^\star) +  \left\langle \nabla \phi({\bf w}^\star), DT \xi \right \rangle + \frac{1}{2} \left\langle \nabla^2 \phi({\bf w}^\star) DT \xi , DT \xi \right \rangle  \notag \\
&=& \phi({\bf w}^\star)  + \frac{1}{2} \left\langle \nabla^2 \phi({\bf w}^\star) DT \xi , DT \xi \right \rangle \notag  \\
&= & \phi({\bf w}^\star)  + \frac{1}{2} \left\langle D \nabla^2 \phi({\bf w}^\star) DT \xi , T \xi \right \rangle . \label{eq:Taylor_approx_1}
\end{eqnarray*}

Let's assume that $D \nabla^2 \phi({\bf w}^\star) D $ is approximately diagonal, i.e., there exists a diagonal matrix $Z=Diag(z_1,\dots,z_d)$ with $z_l>0$ for all $l$ such that $D \nabla^2 \phi({\bf w}^\star) D \approx Z$. Under this assumption, \begin{equation}\label{eq:expansion-2}\left\langle D \nabla^2 \phi({\bf w}^\star) DT \xi , T \xi \right \rangle \approx \left\langle Z T \xi , T \xi \right \rangle =  \sum_{i=1}^d z_i t_i^2 \xi_i^2,\end{equation} 
and hence
\begin{eqnarray*}\Exp{\phi({\bf w}^\star +  D T \xi)}  & \approx & 
\Exp{ \phi({\bf w}^\star)  + \frac{1}{2} \left\langle ZT \xi , T \xi \right \rangle } \\
&=& \phi({\bf w}^\star)  + \frac{1}{2} \Exp{\left\langle ZT \xi , T \xi \right \rangle } \\
&\overset{\Cref{eq:expansion-2}}{=}& \phi({\bf w}^\star)  + \frac{1}{2} \Exp{ \sum_{i=1}^d z_i t_i^2 \xi_i^2} \\
&=&\phi({\bf w}^\star)  + \frac{1}{2}  \sum_{i=1}^d z_i t_i^2 \Exp{\xi_i^2} \\
&= & \phi({\bf w}^\star)  + \frac{1}{2}  \sum_{i=1}^d z_i t_i^2 M_{2,i}\\
&=& \phi({\bf w}^\star)  +  \sum_{i=1}^d \alpha_i t_i^2,
\end{eqnarray*}
where $\alpha_i \eqdef \frac{z_i M_{2,i}}{2}$. 


\subsection{Approximating the mean of a smooth function perturbed around a local minimizer: multivariate block case}\label{sec:D3} Let's now extend the last result to the multivariate {\em block} setting.  Consider a sufficiently smooth function
 $\phi:\R^d\to \R$, where $d=d_1+\dots+d_L$. For ${\bf w}\in \R^d$ we shall write ${\bf w} = ({\bf w}_1,\dots,{\bf w}_L)$, where ${\bf w}_l \in \R^{d_l}$ is the $l^{\rm th}$ block of vector ${\bf w}$. Let ${\bf w}^\star = ({\bf w}^{\star,1}, \dots {\bf w}^{\star,L}) \in \R^d$ be a local minimizer of $\phi$, whence \begin{equation}\label{eq:grad=0-3}\nabla \phi({\bf w}^\star)=0.\end{equation} Let  $\xi  = (\xi_1,\dots,\xi_L)$ be a random vector in $\R^d$ such that the block $\xi_l \in \R^{d_l}$ has finite second moment: \begin{equation} \label{eq:M_{2,l}} M_{2,l} \eqdef \Exp{\|\xi_l\|_2^2}\end{equation} for all $l\in \{1,\dots,L\}$. 
 
For each $l\in \{1,\dots,L\}$, let $T_l \eqdef t_l I_{d_l} \in \R^{d_l\times d_l}$ and $D_l \eqdef \|{\bf w}^{\star,l}\|_2 I_{d_l} \in \R^{d_l\times d_l}$. Further, let $T\eqdef Diag(T_1,\dots, T_L)\in \R^{d \times d}$ and $D \eqdef Diag(D_1,\dots,D_L)\in \R^{d \times d}$. That is, $D$ (resp.\ $T$) be the blog-diagonal matrix whose blocks are formed from the matrices $\{D_l\}$ (resp.\ $\{T_l\}$).   From Taylor's approximation of $\phi$ around ${\bf w}^\star$, we get
\begin{eqnarray*}\phi({\bf w}^\star +  D T \xi)  & \approx &  \phi({\bf w}^\star) +  \left\langle \nabla \phi({\bf w}^\star), DT \xi \right \rangle + \frac{1}{2} \left\langle \nabla^2 \phi({\bf w}^\star) DT \xi , DT \xi \right \rangle  \\
&\overset{\Cref{eq:grad=0-3}}{=}& \phi({\bf w}^\star)  + \frac{1}{2} \left\langle \nabla^2 \phi({\bf w}^\star) DT \xi , DT \xi \right \rangle \\
&= & \phi({\bf w}^\star)  + \frac{1}{2} \left\langle D \nabla^2 \phi({\bf w}^\star) DT \xi , T \xi \right \rangle .
\end{eqnarray*}

Let's assume that $D \nabla^2 \phi({\bf w}^\star) D $ is approximately block-diagonal, i.e., there exists a block-diagonal matrix $Z=Diag(Z_1,\dots,Z_L)$ such that $D \nabla^2 \phi({\bf w}^\star) D \approx Z$. Moreover, assume that $Z_l = z_l I_{d_l}$ for some $z_l >0$ and all $l \in \{1,\dots,L\}$. Under this assumption, \begin{equation} \label{eq:diag-3}\left\langle D \nabla^2 \phi({\bf w}^\star) DT \xi , T \xi \right \rangle \approx \left\langle Z T \xi , T \xi \right \rangle = \sum_{l=1}^L \left\langle Z_l T_l \xi_l , T_l \xi_l  \right \rangle = \sum_{l=1}^L t_l^2 z_l \|\xi_l \|_2^2,\end{equation}
and hence
\begin{eqnarray*}\Exp{\phi({\bf w}^\star +  D T \xi)}  & \approx & 
\Exp{ \phi({\bf w}^\star)  + \frac{1}{2} \left\langle ZT \xi , T \xi \right \rangle } \\
&=& \phi({\bf w}^\star)  + \frac{1}{2} \Exp{\left\langle ZT \xi , T \xi \right \rangle } \\
&\overset{\Cref{eq:diag-3}}{=}& \phi({\bf w}^\star)  + \frac{1}{2} \Exp{ \sum_{l=1}^L t_l^2 z_l \|\xi_l \|_2^2} \\
&=&\phi({\bf w}^\star)  + \frac{1}{2}  \sum_{l=1}^L t_l^2 z_l \Exp{\|\xi_l \|_2^2} \\
&\overset{\Cref{eq:M_{2,l}}}{=} & \phi({\bf w}^\star)  + \frac{1}{2}  \sum_{l=1}^L t_l^2 z_l M_{2,l} \\
&=& \phi({\bf w}^\star)  +  \sum_{l=1}^L \alpha_l t_l^2,
\end{eqnarray*}
where $\alpha_l \eqdef \frac{z_l M_{2,l}}{2}$. 


\newtheorem{corollary}{Corollary}[theorem]

\section{Experimental Justification of Assumption~\ref{ass:reg}}
\label{sec:assmp_3_justification}

In this section we justify Assumption~\ref{ass:reg} by analyzing the Hessian of the OPT-125M model \citep{zhang2022opt}.
This model consists of $12$ blocks, each containing multiple matrices. Computing the full Hessian for even a single matrix from the self attention of the first layer is infeasible due to its size -- $768\times768 = 589,824$ parameters, leading to a Hessian with around $400$ Billion entries.

Given these constraints, we focused on a smaller scope of $t\in \N$ parameters of the module from every layer.
Figure~\ref{fig:dHd_short} illustrates the structure of the product in Equation~(\ref{eq:product_ass3}).

\begin{figure}[ht]
    \centering
    \vspace{-7px}
    \includegraphics[width=0.85\linewidth]{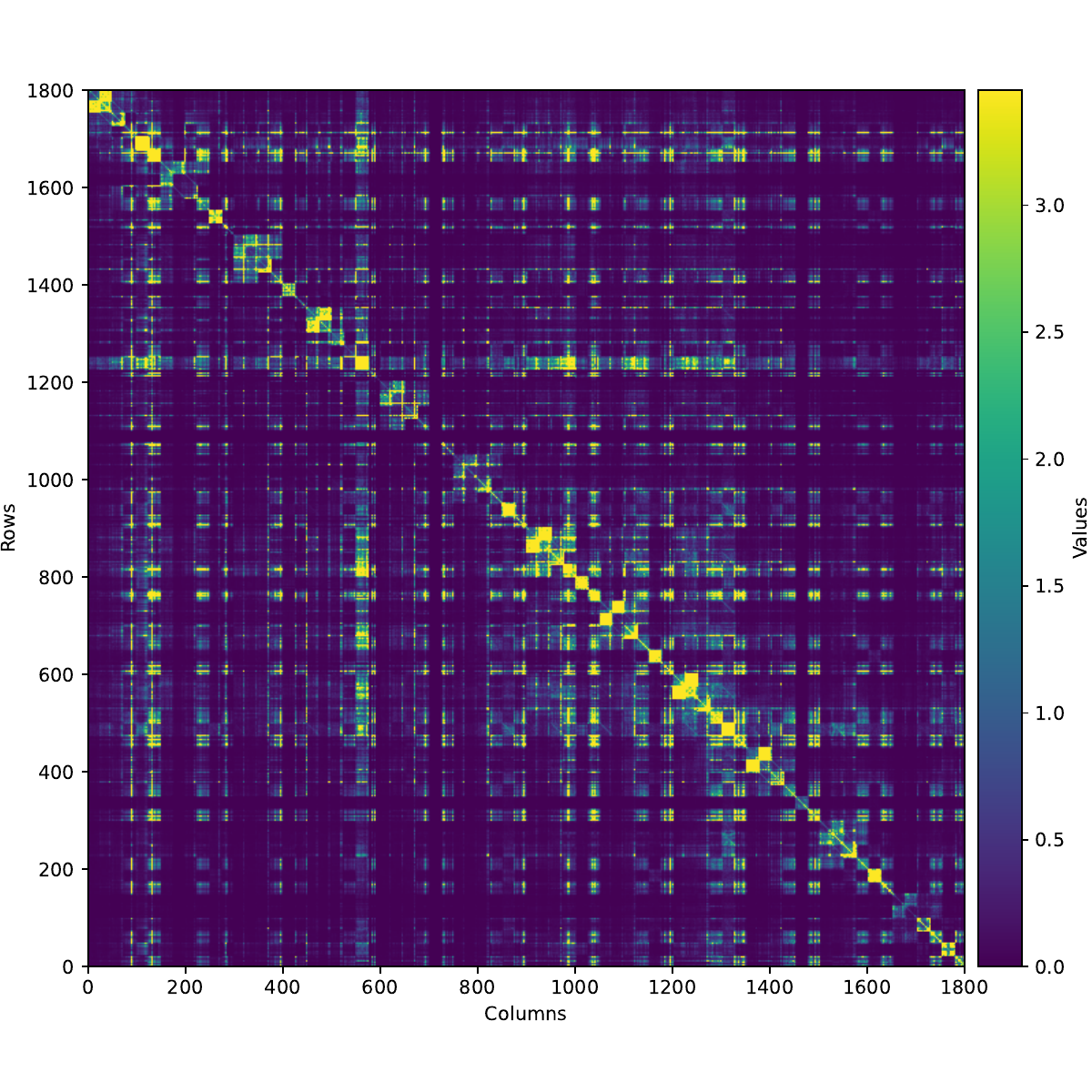}
    \captionsetup{width=1.0\linewidth}
    \captionsetup{aboveskip=0pt, belowskip=12pt}
    \caption{Visual representation of part of the product $|D^\star \nabla^2 \phi\left( {\cal R}(W^\star)\right) D^\star|$. The diagonal-dominant structure justifies the Assumption~\ref{ass:reg}. Please see Appendix Figure~\ref{fig:dHd}) for additional subsets.}
    \label{fig:dHd_short}
\end{figure}

We can see a clear diagonal structure of the product $D^\star \nabla^2 \phi\left( {\cal R}(W^\star)\right) D^\star$.
We observed the same diagonal structure under several other parameter samples for OPT-125M for the product $D^\star \nabla^2 \phi\left( {\cal R}(W^\star)\right) D^\star$ in Assumption~\ref{ass:reg}. Hence,  this suggests that Assumption~\ref{ass:reg} is satisfied.

\subsection{Experimental setup}

To justify the Assumption~\ref{ass:reg}  we have made various experiments with a OPT-125M model \citep{zhang2022opt}. The OPT-125M model consists of $B=12$ subsequent blocks, each containing multiple layers (matrices). For every block $i\in\{1, \cdots, B\}$ we have the following linear layers:
\begin{itemize}
    \item Self attention:
    \begin{itemize}
        \item q\_proj: $Q_i \in \R^{768 \times 768}$
        \item v\_proj: $V_i \in \R^{768 \times 768}$
        \item k\_proj: $K_i \in \R^{768 \times 768}$
        \item out\_proj: $O_i \in \R^{768 \times 768}$
    \end{itemize}
    \item First fully connected layer: $C_i \in \R^{768 \times 3072}$
    \item Second fully connected layer: $S_i \in \R^{3072 \times 768}$
\end{itemize}

Let us define $m\in \N$ -- the number of layers in a blocks. For OPT-125M, $m=6$. Since we have $B=12$ blocks with $m=6$ layers in each, in total there are $L=Bm=72$ layers, so $l\in\{1,\cdots,72\}$.

\subsection{Single layer from a single block}

Computing the full Hessian $\nabla^2 \phi\left( {\cal R}(Q_1)\right)$ for even a single $Q_1 \in \R^{768\times768}$ matrix from a self attention of the first block is infeasible due to its size -- $768\times768$ results in $589,824$ parameters, leading to a Hessian with around $400$ billion elements.

Given these constraints, we decided to focus on a smaller scope of $t\in \N$ parameters of the module. Let us consider some layer $W_l \in \R^{d^l_{in} \times d^l_{out}}$.

In this section we will use the following notation for entries of vectors and matrices: for any vector $r$, we denote $[r]_i$ as the $i$-th element of the vector $r$. Similarly, for any matrix $A$, we denote $[A]_{ij}$ as the element of $A$ in the intersection of the $i$-th row and $j$-th column.

Now let us consider the first $t \in \{1, \cdots, d^l_{in} \cdot d^l_{out}\}$ entries of the matrix $W_l$ -- the vector $\widetilde{W}_l = [\mathcal{R}(W_l)]_{:t}$, $\widetilde{W}_l \in \R^{t}$. With these notation we define a concatenation function $\psi_{W_l}: \R^{t} \to \R^{d^l_{in} \cdot d^l_{out}}$:
\begin{equation*}
    \psi_{W_l}(\widetilde{W}_l) = \left(\widetilde{W}_l, [\mathcal{R}(W_l)]_{t:}\right).
\end{equation*}
Finally, let us define the perplexity function $\widetilde{\phi}_{W_l}: \R^{t} \to \R$ as a function of $\widetilde{W}_l$:
\begin{equation*}
    \widetilde{\phi}_{W_l}(\widetilde{W}_l) \eqdef \phi\left(\psi_{W_l}(\widetilde{W}_l)\right).
\end{equation*}

For a subset of parameters we define a matrix $\widetilde{D}_l \in \R^{t\times t}$: $\widetilde{D}_l \eqdef \|W_l\|_F I_{t} \in \R^{t\times t}$. Then we define the analog of the product (\ref{eq:product_ass3}) for a subset of $t$ parameters:

\begin{equation*}
    \widetilde{D}_l \nabla^2 \widetilde{\phi}_{W_l}(\widetilde{W}_l) \widetilde{D}_l \approx \widetilde{Z}_l,
\end{equation*}
where $\widetilde{Z}_l = z_l I_t, z_l > 0$.

\subsection{All layers from a single block}
If we aim to consider all sub-modules of a single block $i\in\{1, \cdots, B\}$, then we define $\widetilde{W}^i \eqdef (\widetilde{W}_{mi+1}, \cdots, \widetilde{W}_{mi+m})$, where $\widetilde{W}_{mi+s} \in \R^t$ is the subset of the first $t$ parameters of the $s$-th sub-module from the $i$-th block: $\widetilde{W}_{mi+s} = [\mathcal{R}(W_{mi+s})]_{:t}$, $\widetilde{W}_{mi+s} \in \R^{t}$. In this case, the perplexity function $\widetilde{\phi}^i: \R^{tm} \to \R$ can be represented the the following way:
\begin{equation*}
    \widetilde{\phi}^i(\widetilde{W}^i) \eqdef \phi\left(\psi_{W_{mi+1}}(\widetilde{W}_{mi+1}), \cdots, \psi_{W_{mi+m}}(\widetilde{W}_{mi+m})\right).
\end{equation*}

Then we define
$$\widetilde{D}^i \eqdef Diag(\widetilde{D}_{mi+1},\dots,\widetilde{D}_{mi+m})\in \R^{tm \times tm},$$
where $\widetilde{D}_{mi+s} \eqdef \|W_{mi+s}\|_F I_{t} \in \R^{t \times t}$. Then the analog of the product (\ref{eq:product_ass3}) will be 
$$
\widetilde{D}^i \nabla^2 \widetilde{\phi}^i(\widetilde{W}^i) \widetilde{D}^i \approx \widetilde{Z}^i,
$$ where $\widetilde{Z}^i=Diag(\widetilde{Z}_{mi+1},\dots,\widetilde{Z}_{mi+m}) \in \R^{tm \times tm}$.

\subsection{Single sub-module from all blocks}

If we aim to consider a single sub-module $Q_i$ for all $i\in\{1,\cdots, B\}$, then we define $\widetilde{Q} \eqdef (\widetilde{W}_1, \widetilde{W}_{m+1}, \widetilde{W}_{2m+1}, \cdots, \widetilde{W}_{Bm+1})$, where $\widetilde{W}_{im+1} \in \R^t$ is the subset of the first $t$ parameters of the first sub-module from the $i$-th block: $\widetilde{W}_{im+1} = [\mathcal{R}(Q_i)]_{:t}$, $\widetilde{W}_{im+1} \in \R^{t}$. In this case, the perplexity function $\widetilde{\phi}_Q: \R^{Bt} \to \R$ can be represented the the following way:
\begin{equation*}
    \widetilde{\phi}_Q(\widetilde{Q}) \eqdef \phi\left(\psi_{W_1}(\widetilde{W}_1),  \psi_{W_{m+1}}(\widetilde{W}_{m+1}), \psi_{W_{2m+1}}(\widetilde{W}_{2m+1}), \cdots, \psi_{W_{Bm+1}}(\widetilde{W}_{Bm+1})\right).
\end{equation*}

Then we define
$$\widetilde{D}_Q \eqdef Diag(\widetilde{D}_{1}, \widetilde{D}_{m+1}, \widetilde{D}_{2m+1}, \dots,\widetilde{D}_{Bm + 1})\in \R^{Bt \times Bt},$$
where $\widetilde{D}_{im+1} \eqdef \|Q_{i}\|_F I_{t} \in \R^{t\times t}$, $Q_i$ is the first module from the $i$-th layer. Then the analog of the product (\ref{eq:product_ass3}) will be 
$$
\widetilde{D}_Q \nabla^2 \widetilde{\phi}_Q(\widetilde{Q}) \widetilde{D}_Q \approx \widetilde{Z}_Q,
$$ where $\widetilde{Z}_Q=Diag(\widetilde{Z}_{1}, \widetilde{Z}_{m+1}, \widetilde{Z}_{2m+1}, \dots, \widetilde{Z}_{Bm+1}) \in \R^{Bt \times Bt}$.

\subsection{All layers from all blocks}

Finally, if we aim to consider all layers and all sub-modules, then we define $\widetilde{W} \eqdef (\widetilde{W}_1, \cdots, \widetilde{W}_L)$. The perplexity function $\widetilde{\phi} : \R^{tL} \to \R$ in this case will be
\begin{equation*}
    \widetilde{\phi}(\widetilde{W}) \eqdef \phi\left(\psi_{W_1}(\widetilde{W}_1), \cdots, \psi_{W_L}(\widetilde{W}_L)\right)
\end{equation*}
and $\widetilde{D}^{\star} \eqdef Diag(\widetilde{D}_1,\dots,\widetilde{D}_L)\in \R^{tL \times tL}$, the analog of the product (\ref{eq:product_ass3}) will be
$$
\widetilde{D}^\star \nabla^2 \widetilde{\phi}(\widetilde{W}) \widetilde{D}^\star \approx \widetilde{Z},
$$ where $\widetilde{Z}=Diag(\widetilde{Z}_1,\dots,\widetilde{Z}_L) \in \R^{tL \times tL}$.

\subsection{Experimental results}

Firstly, let us consider $t=768$ parameters from only the first layer's $Q_1 \in \R^{768\times768}$. On (Fig.~\ref{fig:dHd_full_line_q_proj}) we can see a diagonal structure of the product $ \widetilde{D}_1 \nabla^2 \widetilde{\phi}_{W_1}(\widetilde{W}_1) \widetilde{D}_1  \in \R^{768 \times 768}$. In this particular case $\widetilde{W}_1$ is the first row of the matrix $Q_1$.

On the next step, we considered all sub-modules of the first layer -- from each matrix $W \in \{Q_1, V_1, K_1, O_1, C_{1}, S_{1}\}$, we selected $t=300$ parameters (Fig.~\ref{fig:dHd_first_layer_all_matrices}). We can see a clear diagonal structure of the product $\widetilde{D}^1 \nabla^2 \widetilde{\phi}^1(\widetilde{W}^1) \widetilde{D}^1 \in \R^{1800 \times 1800}$.

Then we expanded the Hessian computation to include parameters from multiple layers -- from each layer, we selected $t=150$ parameters of the matrix $Q_i$ and repeated this for $i\in \{1,\cdots, B\}$, yielding a product $\widetilde{D}_Q \nabla^2 \widetilde{\phi}_Q(\widetilde{Q}) \widetilde{D}_Q \in \R^{1800 \times 1800}$ (Fig.~\ref{fig:dHd_all_layers_first_matrix}).

Finally, we expanded the Hessian computation to include parameters from all layers and all sub-modules -- from each layer $W_l$, we selected $t=25$ parameters (Fig.~\ref{fig:dHd_all_layers_all_matrices}). As before, we can see a clear diagonal structure of the product $\widetilde{D}^\star \nabla^2 \widetilde{\phi}(\widetilde{W}) \widetilde{D}^\star \in \R^{1800 \times 1800}$.

In all considered subsets of parameters of OPT-125M we observed a diagonal structure of the product $s$ from Assumption~\ref{ass:reg}. Hence, we have fairly strong reasons to believe that Assumption~\ref{ass:reg} is satisfied for LLM's.

\subsection{Implementation details}

To be able to compute the Hessian for only a subset of the network's parameters, we manually defined a perplexity function as a function of the specific subset of parameters. We used this with the PyTorch \citep{paszke2019pytorch} autograd routines to compute the Hessian quickly and accurately.

Note that Hessian computation induces significant memory consumption when using larger batch sizes. A larger batch size is crucial for accurate perplexity computation, and therefore for accurate Hessian computation (for instance, a batch size of $140$ yields a WikiText-2 perplexity for OPT-125M of $27.65$, while a batch size of $4$ results in a WikiText-2 perplexity of $30.06$). To mitigate the memory overflow problem, we modified the perplexity function to exhibit an additive property (Sec.~\ref{sec:hessian_for_large_bs}). This means that we can compute the Hessian for the full batch by averaging Hessians, computed over smaller batches. With this adjustment, we were able to use PyTorch's autograd routine to compute the Hessian on a full batch size without encountering memory overflow issues.

\subsection{How to compute a Hessian for large batch sizes}
\label{sec:hessian_for_large_bs}

The perplexity function is computed by the following sequence of events:

\begin{enumerate}
    \item Take the input $X \in \R^{b \times l}$ and compute the output of the model by the function $f:\R^{b\times l} \to \R^{b\times l\times n}$, where $b$ is a batch size and $l$ is an output sequence length and $n$ is the size of the embedding space. For OPT-125M, $l=2048, n=50272$. Elements of $f(X)$ are called logits and represent the probability for each word to be the next token in the output sequence.
    \item Compute the Cross Entropy Loss of the output by the function $g:\R^{b\times l\times n} \to \R^{b}$, $$g(f(X)) \eqdef \text{nn.CrossEntropyLoss}(f(X)).$$
    \item Average the Cross Entropy Loss for all elements from a batch: $$\Bar{c} \eqdef \frac{1}{b}\sum_{i=1}^{b}{c_i}.$$
    \item Compute the Perplexity: $$PPL(\Bar{c}) \eqdef e^{\Bar{c}}.$$
\end{enumerate}

Let us change steps (3) and (4): on the step (3) we will not divide the sum by $b$, so we define $$\Bar{c}' = \sum_{i=1}^{b}{c_i},$$ on the step (4) we will not use the exponential function -- instead we will use the identity function: 
\begin{equation}
    \label{eq:new_def_of_ppl}
    PPL'(\Bar{c}') \eqdef \Bar{c}'.
\end{equation}

\begin{figure}[ht]
    \centering
    \begin{subfigure}{0.49\linewidth}
        \includegraphics[width=0.99\linewidth]{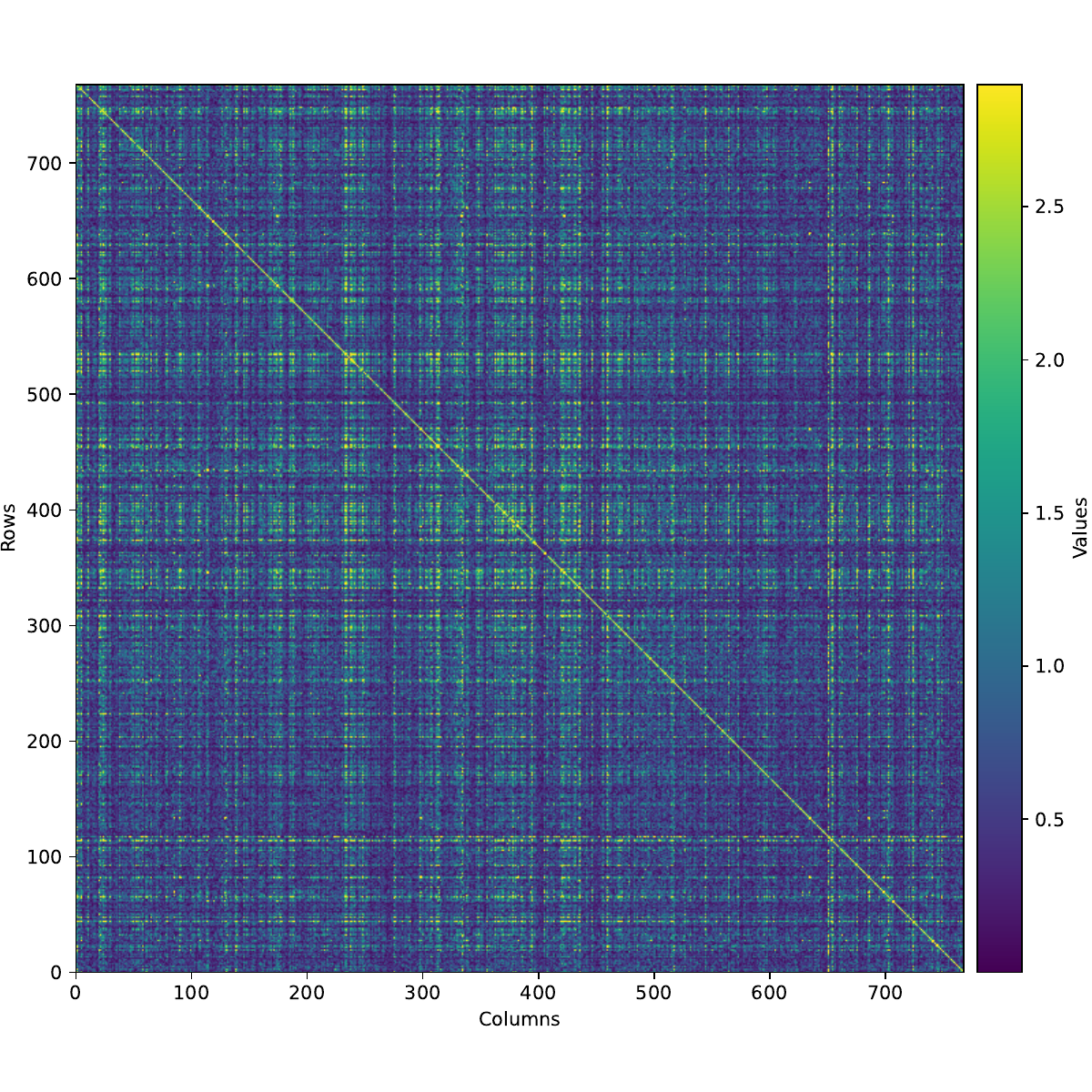}
        \captionsetup{width=.9\linewidth}
        \captionsetup{aboveskip=0pt, belowskip=12pt}
        \caption{$|\widetilde{D}_1 \nabla^2 \widetilde{\phi}_{W_1}(\widetilde{W}_1) \widetilde{D}_1|$, $t=768$ parameters.}
        \label{fig:dHd_full_line_q_proj}
    \end{subfigure}
    \begin{subfigure}{0.49\linewidth}
        \includegraphics[width=0.99\linewidth]{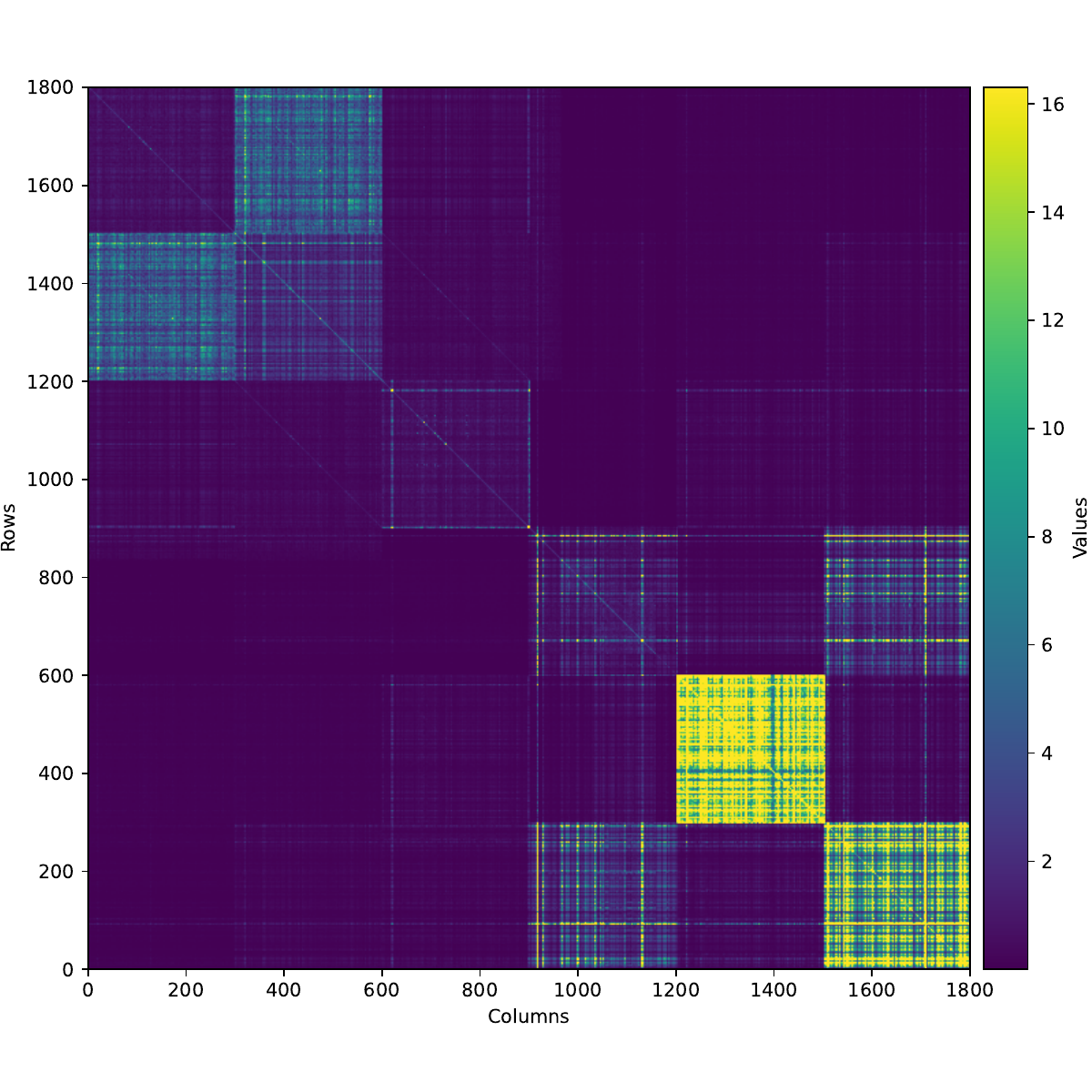}
        \captionsetup{width=.9\linewidth}
        \captionsetup{aboveskip=0pt, belowskip=12pt}
        \caption{$|\widetilde{D}^1 \nabla^2 \widetilde{\phi}^1(\widetilde{W}^1) \widetilde{D}^1|$, $t=300$ parameters.}
        \label{fig:dHd_first_layer_all_matrices}
    \end{subfigure}
    \centering
    \begin{subfigure}{0.49\linewidth}
        \includegraphics[width=0.99\linewidth]{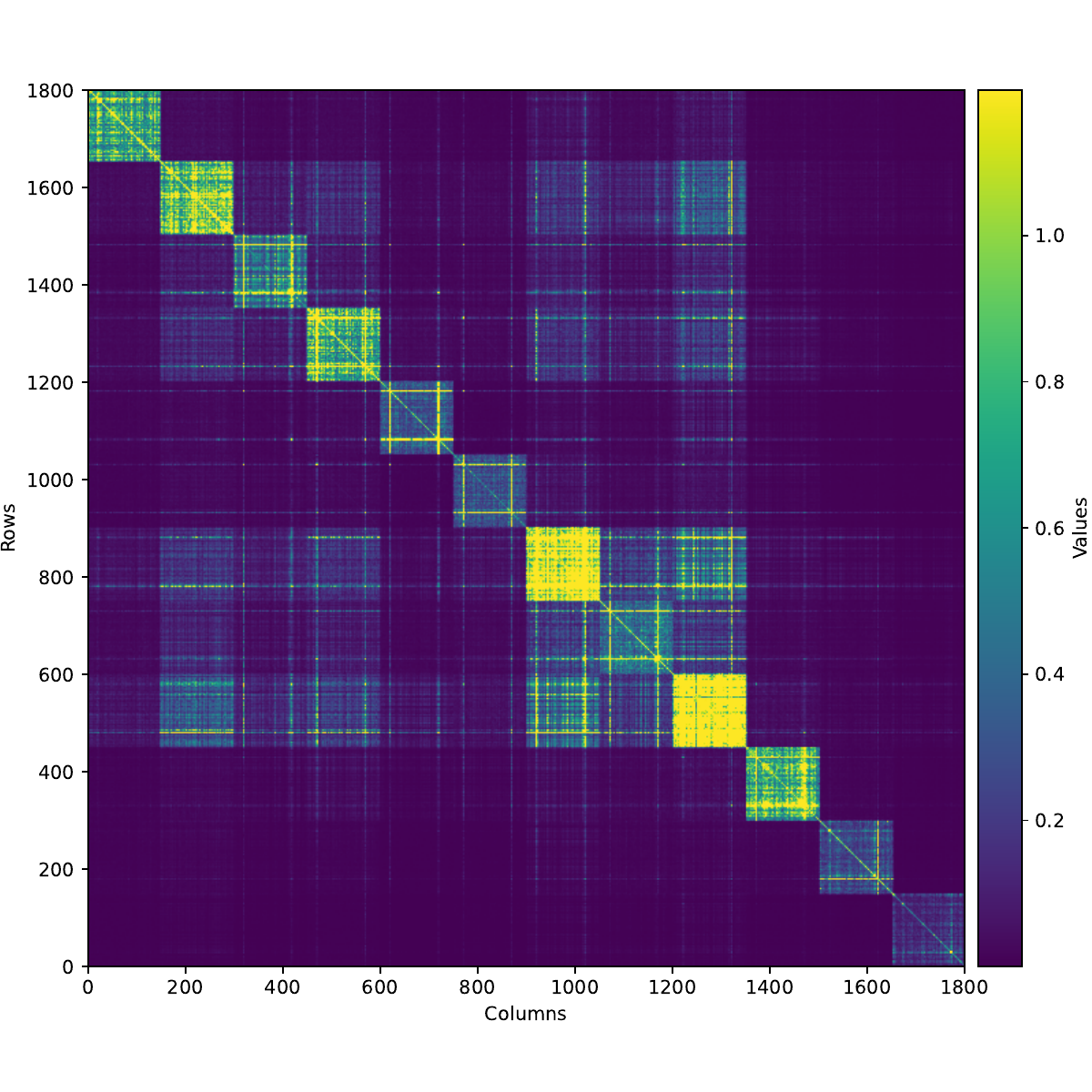}
        \captionsetup{width=.9\linewidth}
        \captionsetup{aboveskip=0pt, belowskip=12pt}
        \caption{$|\widetilde{D}_Q \nabla^2 \widetilde{\phi}_Q(\widetilde{Q}) \widetilde{D}_Q|$, $t=150$ parameters.}
        \label{fig:dHd_all_layers_first_matrix}
    \end{subfigure}
    \begin{subfigure}{0.49\linewidth}
        \includegraphics[width=0.99\linewidth]{srcs/data/dHd_all_layers_all_matrices.pdf}
        \captionsetup{width=.9\linewidth}
        \captionsetup{aboveskip=0pt, belowskip=12pt}
        \caption{$|\widetilde{D}^\star \nabla^2 \widetilde{\phi}(\widetilde{W}) \widetilde{D}^\star|$, $t=25$ parameters.}
        \label{fig:dHd_all_layers_all_matrices}
    \end{subfigure}
    \caption{Visual representation of different parts of the product $D^\star \nabla^2 \phi\left( {\cal R}(W^\star)\right) D^\star$ for different subsets of parameters. For clarity, we have plotted the absolute values of the entries in all cases to better visualize the magnitude of the elements. For all considered subsets we can clearly see a diagonal structure of the corresponding product. This justifies the Assumption~\ref{ass:reg}.}
    \label{fig:dHd}
\end{figure}

\begin{theorem}
\label{th:th_additive_prop}
$PPL'(\Bar{c}')$ defined in (\ref{eq:new_def_of_ppl}) has the additive property. In other words, the perplexity computed for the full batch $b$ will be equal to the sum of perplexities, computed for $b$ subsequent samples.
\end{theorem}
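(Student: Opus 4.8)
The plan is to reduce the claim to the elementary fact that a finite sum is additive over any partition of its index set, once we observe that the per-sample cross-entropy losses are computed independently of one another. First I would unwind the definition: by construction $PPL'(\bar c') = \bar c' = \sum_{i=1}^{b} c_i$, where $c_i = [g(f(X))]_i$ is the cross-entropy loss attached to the $i$-th element of the batch $X \in \R^{b\times l}$. Here the crucial structural observation---which I would isolate as a short lemma---is that the model map $f:\R^{b\times l}\to\R^{b\times l\times n}$ acts \emph{sample-wise}: the logits $[f(X)]_i$ of the $i$-th sequence depend only on the $i$-th input row $X_i$ (a transformer forward pass contains no cross-sample coupling, e.g.\ no batch normalization), and $\mathrm{nn.CrossEntropyLoss}$ is then applied token-wise within each sequence. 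Consequently $c_i$ is a function of $X_i$ and the weights alone, and the map $X \mapsto (c_1,\dots,c_b)$ is ``separable'' across the batch dimension.

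Next I would formalize the additive property itself. Let $\{1,\dots,b\}$ be partitioned into disjoint index sets $I_1,\dots,I_K$, and let $X^{(k)}$ denote the sub-batch consisting of the rows $\{X_i : i\in I_k\}$. Applying the modified definition (\ref{eq:new_def_of_ppl}) to $X^{(k)}$---no averaging by batch size, identity in place of the exponential---gives $PPL'(X^{(k)}) = \sum_{i\in I_k} c_i$, where the sample-wise lemma guarantees that the $c_i$ occurring here are exactly the same numbers as in the full-batch computation. Summing over $k$ and using that the $I_k$ partition $\{1,\dots,b\}$ yields $\sum_{k=1}^{K} PPL'(X^{(k)}) = \sum_{k=1}^{K}\sum_{i\in I_k} c_i = \sum_{i=1}^{b} c_i = PPL'(X)$, which is precisely the asserted additivity; specializing to singletons $I_k=\{k\}$ recovers the statement that the full-batch value equals the sum over the $b$ individual samples.

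Finally, although the theorem is stated only for $PPL'$, I would record the consequence actually exploited in the Hessian experiments: since each $c_i$ depends smoothly on the weights and the Hessian operator $\nabla^2$ is linear, additivity transfers to derivatives, so $\nabla^2 PPL'(X) = \sum_{i=1}^{b}\nabla^2 c_i = \sum_{k=1}^{K}\nabla^2 PPL'(X^{(k)})$, which is what permits accumulating the full-batch Hessian from small-batch Hessians without memory blow-up. The only step requiring any real care is the sample-wise lemma---i.e.\ making explicit that the architecture under consideration contains no batch-coupling layers; everything downstream of it is bookkeeping with finite sums and the linearity of differentiation.
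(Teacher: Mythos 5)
Your proposal is correct and follows essentially the same route as the paper's proof: the key step in both is that the forward pass and the cross-entropy loss are computed independently per batch element, so $PPL'(X)=\sum_{i=1}^{b}\hat{g}(\hat{f}(X_{i,:}))$ and additivity is just additivity of a finite sum. Your extra remarks---stating the sample-wise independence as an explicit lemma (no batch-coupling layers), allowing arbitrary partitions rather than only singletons, and transferring additivity to Hessians via linearity of $\nabla^2$---merely make explicit what the paper assumes implicitly and states as its corollary.
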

\begin{proof}
Let us define the functions $\hat{f}:\R^{l} \to \R^{l\times n}$ and $\hat{g}:\R^{l\times n} \to \R$ to be the same as $f:\R^{b\times l} \to \R^{b\times l\times n}$ and $g:\R^{b\times l\times n} \to \R^{b}$, but with fixed $b=1$, so we effectively have a reduction of one dimension.

Since the output $[f(X)]_i \in \R^{l\times n}$ for each sample $i$ from a batch is computed independently, the full output $f(X)$ can be obtained by concatenation of $b$ outputs from $\hat{f}(X_{i,:})$ for $i \in \{1, \cdots, b\}$:
$$
f(X) = \left(\hat{f}(X_{1,:}), \cdots, \hat{f}(X_{b,:})\right).
$$
The same is true for a Cross Entropy Loss function $g(f(X))$:
$$
g(f(X)) = \left(g(f(X))_1, \cdots, g(f(X))_b\right) = \left(\hat{g}(\hat{f}(X_{1,:})), \cdots, \hat{g}(\hat{f}(X_{b,:}))\right),
$$
hence the equation (\ref{eq:sum_inside_proof_additive_prop}) holds:
\begin{equation}
    \label{eq:sum_inside_proof_additive_prop}
    PPL'(X) = \sum_{i=1}^b{\left[g(f(X))\right]_i} = \sum_{i=1}^b{\hat{g}(\hat{f}(X_{i,:}))}
\end{equation}
where $X_{i,:} \in \R^l$.

In essence, equation (\ref{eq:sum_inside_proof_additive_prop}) means that the perplexity computed for the full batch $b$ will be equal to the sum of perplexities, computed for $b$ subsequent samples.

\end{proof}

\begin{corollary}
    We can compute the Hessian of the perplexity function (\ref{eq:new_def_of_ppl}) over large batch of samples by summing up several Hessians, computed on a single sample:
    \begin{equation}
        \nabla^2 PPL'(g(f(X))) = \sum_{i=1}^b{\nabla^2 PPL'(\hat{g}(\hat{f}(X_{i,:})))}.
    \end{equation}
\end{corollary}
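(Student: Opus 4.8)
The plan is to exploit the single fact that a transformer forward pass and the cross-entropy objective both act \emph{independently across the batch dimension}: row $i$ of the input $X\in\R^{b\times l}$ fully determines row $i$ of the logits and of the per-sample loss vector, with no coupling between distinct rows. First I would make this precise by introducing the single-sample maps $\hat f:\R^{l}\to\R^{l\times n}$ and $\hat g:\R^{l\times n}\to\R$ obtained from $f$ and $g$ by fixing $b=1$, and record the two structural identities $f(X)=(\hat f(X_{1,:}),\dots,\hat f(X_{b,:}))$ and, componentwise, $[g(f(X))]_i=\hat g(\hat f(X_{i,:}))$ for every $i\in\{1,\dots,b\}$. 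Both are immediate from the definitions of these operations, since ``concatenate along the batch axis'' commutes with $f$ and with the per-row reduction performed by the cross-entropy loss.

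Next I would unfold the modified definition \eqref{eq:new_def_of_ppl}. Because step~(3) now forms $\bar c' \eqdef \sum_{i=1}^b c_i$ rather than the average, and step~(4) applies the identity instead of $\exp$, we obtain $PPL'(X)=\bar c' = \sum_{i=1}^b [g(f(X))]_i$. Substituting the componentwise identity from the previous step gives $PPL'(X)=\sum_{i=1}^b \hat g(\hat f(X_{i,:}))$, and since the single-sample quantity is precisely $PPL'(X_{i,:}) = \hat g(\hat f(X_{i,:}))$, this is exactly the claimed additive identity.

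I do not expect a genuine analytic obstacle; the whole content is the observation that removing the $1/b$ normalization and the exponentiation converts the (otherwise non-additive) perplexity into a plain sum of per-sample terms, and that this sum respects the batch partition because the network performs no cross-sample mixing. The only place that needs mild care is shape bookkeeping --- checking that the two structural identities above are literally true for the concrete $f$ and $g$ being used --- and this is routine. The corollary then follows by linearity of differentiation: $\nabla^2$ of a finite sum is the sum of the $\nabla^2$'s, each summand $\hat g(\hat f(X_{i,:}))$ being viewed as a function of the same weight parameters with respect to which the Hessian is taken.
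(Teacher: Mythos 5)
Your proposal is correct and follows essentially the same route as the paper: it re-establishes the batch-additivity of $PPL'$ (the content of Theorem~\ref{th:th_additive_prop}, via the per-row independence of $f$ and $g$ and the removal of the $1/b$ normalization and exponentiation) and then invokes linearity of the Hessian with respect to the weights. No gaps.
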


\section{Proof of Expected Error}\label{app:mse_proof}

\begin{align*}
t_l^2 (W_l, \mathcal{G}_n^p)  &\eqdef\\
&\eqdef \Exp{ \| \widehat{W}_l - W_l \|_F^2 }/\|W_l\|_F^2 =\\
&= \frac{\sum_{i=1}^{D/g}\Exp{\|\widehat{w}_{\{i\}} - w_{\{i\}}\|_F^2}}{\sum_{i=1}^{D/g} \|w_{\{i\}}\|_F^2} =\\
&= \frac{\sum_{i=1}^{D/g}s_i^2\Exp{\|q^\dagger_{\{i\}} - w^\dagger_{\{i\}}\|_F^2}}{\sum_{i=1}^{D/g} \|w_{\{i\}}\|_F^2} \approx\\
&\approx \frac{\sum_{i=1}^{D/g}s_i^2{\rm E}_{\mathbf{w_{\{i\}}^\dagger}\sim\mathcal{N}(0, 1)}\left[\|q^\dagger_{\{i\}} - w^\dagger_{\{i\}}\|_F^2\right]}{\sum_{i=1}^{D/g} \|w_{\{i\}}\|_F^2} \eqdef\\
&\eqdef \frac{\sum_{i=1}^{D/g}s_i^2\cdot g \cdot t^2(\mathcal{G}_n^p)}{\sum_{i=1}^{D/g} \|w_{\{i\}}\|_F^2} =\\
&= t^2(\mathcal{G}_n^p)\frac{\sum_{i=1}^{D/g}\|w_{\{i\}}\|_F^2 }{\sum_{i=1}^{D/g} \|w_{\{i\}}\|_F^2} =\\
&= t^2(\mathcal{G}_n^p)
\end{align*}

That is, $t_l^2$ approximately equals the expected element-wise MSE of rounding the multivariate standard normal distribution to the grid $\mathcal{G}_n^p$

\section{Processing Hadamard Rotated Matrices}\label{app:processing_rotated}

The output $\mathbf{q}^\dagger$ and $\mathbf{s}$ of the Algorithm~\ref{alg:vqrht} represent a vector to which an RHT has been applied. Additional scale $\frac{1}{\sqrt{g}}$ ensures that this transform preserves the $L2$ norm of the vector is preserved, thus making it a random rotation, controlled by seed $\xi$, within the rotation groups of size $g$. This fact opens number of possibilities on how those quantized weights can be processed in the context of LLM inferece.

For the purposes of this section, let us consider a matrix $W \in \R^{N\times N}$ quantized with grid $\mathcal{G}_n^p$ and scales group size $g$ with Algorithm~\ref{alg:vqrht} that we want to multiply by some activation matrix $X \in \R^{K\times N}$.

\paragraph{Dequantization.}
Performing LUT restoration of $\mathbf{w}^\dagger$ from $\mathbf{q}^\dagger$ and $\mathcal{G}_n^p$, then performing a reverse Hadamard rotation in groups and unscaling with $\mathbf{s}$ aligns the representation with the original quantized matrix $W$. This is useful for validation of the representation's correctness, but the time complexity of $O(KN^2 + N^2 \log{g})$ it too slow for matrix-vector product operations ($K=1$) that are crucial for text generation inference.

\paragraph{Rotating Activations.}
The fact that RHT is a rotation, and that it's controllable by the means of seed $\xi$ allow to use it's scalar product preserving properties to compute matrix-vector or matrix-matrix multiplication fully in the rotated space. More specifically, we can apply RHT to a matrix $W$ in weight groups $W_{\{i\}}$ spanning sequential blocks of output dimension $1$ and input dimension $g$ (i.e. $W_{\{1\}} = W$\verb|[0,0:g]|, $W_{\{2\}} = W$\verb|[0,g:2g]|,...). Then, when computing the matrix-matrix product between $X$ and $W$, we can apply the same RHT (same $\xi$) to $X$ in the same groups along the input dimension and multiply the rotated matrices directly. This will result in time complexity of $O(KN^2 + KN \log{g}) = O(KN^2)$. That is, online RHT is asymptotically negligible in terms of time complexity. This approach, among other works, was used by \citet{tseng2024quipbetterllmquantization}. Table~\ref{tab:rotation_speed} showcases the effect of activations Hadamard transforms on end-to-end throughput of Llama-3.1-8B running on an RTX 4090 GPU. The observed difference doesn't exceed four percent.

\begin{table}
\centering
\caption{Throughput comparison for FLUTE kernels utilizing and omitting the Hadamard transform.}
\label{tab:rotation_speed}
\begin{tabular}{l|ccc}
\toprule
Batch Size          & wbits & \begin{tabular}[c]{@{}c@{}}Throughput with\\ Hadamard, tok/s\end{tabular} & \begin{tabular}[c]{@{}c@{}}Throughput w.o.\\ Hadamard, tok/s\end{tabular} \\ \midrule
\multirow{3}{*}{1}  & 2     & 174                                                                       & 179                                                                       \\
                    & 3     & 150                                                                       & 154                                                                       \\
                    & 4     & 139                                                                       & 143                                                                       \\ \hline
\multirow{3}{*}{4}  & 2     & 687                                                                       & 706                                                                       \\
                    & 3     & 592                                                                       & 607                                                                       \\
                    & 4     & 549                                                                       & 562                                                                       \\ \hline
\multirow{3}{*}{16} & 2     & 2433                                                                      & 2517                                                                      \\
                    & 3     & 2122                                                                      & 2205                                                                      \\
                    & 4     & 1980                                                                      & 2058                                                                                    \\
\bottomrule
\end{tabular}
\end{table}

\section{Experiment Configurations}\label{app:exp_configurations}

\subsection{Methods Specifics}

\begin{itemize}
    \item \textbf{HIGGS}: for fixed bitwidth we evaluate multiple grid dimensions $p$ by fitting grid size $n$ to match the bitwidth.
    For 3.25 bits, we use grids with $(p=2, n=88)$ and $(p=3, n=830)$.
    For 4.02 bits, we use $(p=1, n=16)$ and $(p=2, n=256)$.
    And for 4.25 bits, we use $(p=1, n=19)$ and $(p=2, n=361)$.
    We use scaling groups of size 1024 for all HIGGS experiments.

    \item \textbf{Dynamic HIGGS}: for dynamic bitwidth we only use FLUTE grids and CH8. We benchmark FLUTE kernel speed through its custom vLLM integration.
    
    \item \textbf{Normal Float (NF):} for $\approx4$ bits configurations we use the grid from \verb|bitsandbytes| library.
    We use \verb|group_size=64| (default) for 4.25 bits and  \verb|group_size=1024| (same as HIGGS) for 4.02 bits.
    For $\approx3$ bits we obtain the entropy-optimal grid and implement quantization-dequantization ourselves.
    We do not use double quantization. We benchmark kernel speed through its vLLM integration.
    
    \item \textbf{Abnormal Float (AF):} we reuse the grid generation process of~\citep{yoshida2023nf4isntinformationtheoretically} from the corresponding \href{https://github.com/davisyoshida/abnormal-floats}{repository}.
    We apply it to 3 and 4 bit grids of group size 64 and 1024, resulting in 3.02, 3.25, 4.02 and 4.25 bit configurations.
    
    \item \textbf{HQQ:} we use the \href{https://github.com/mobiusml/hqq}{official implementation of Half-Quadratic Quantization}.
    We use \verb|group_size=64| for nbits.25 bits and  \verb|group_size=1024| for nbits.02 bits, where nbits$\in\{2, 3, 4, 8\}$. We disable double quantization for fair comparison with other methods that don't use it.

    \item \textbf{GPTQ:} we use GPTQ implementation from the \href{https://github.com/AutoGPTQ/AutoGPTQ}{autogptq library}. We set \verb|bits=3,group_size=64| for 3.25 bits, \verb|bits=4,group_size=1024| for 4.02 bits and \verb|bits=4,group_size=64| for 4.25 bits. We always \verb|clip=True,mse=1| for MSE-optimal grid clipping. We benchmark MARLIN kernel speed through its vLLM integration.

    \item \textbf{AQLM:} we re-report the perplexity metric specifically for the w/o fine-tuning version of AQLM from the corresponding paper's ablation study~\citep{egiazarian2024extreme}. We benchmark kernel speed through its vLLM integration.

    \item \textbf{QuIP\#:} we re-report the perplexity metric specifically for the w/o fine-tuning version of QuIP\# from the corresponding paper's ablation study~\citep{tseng2024quipbetterllmquantization}.

    \item \textbf{QTIP\#:} we re-report the perplexity metric specifically for the w/o fine-tuning version of QTIP from the corresponding paper's ablation study~\citep{tseng2024qtip}. We re-report the metrics selectively for the 3INST grid for 2 and 4 bits and for the 1MAD grid for 3 bits. We use the end-to-end generation speed measurement scripts provided in the official repository.
 
\end{itemize}

\section{Additional Evaluations}\label{app:additional_evals}

\begin{table}
\begin{tabular}{l|l|l|llllll|l}
\toprule
Method & wbits & Wiki2 & ArcC & ArcE & PiQA & Wino & HellaS & Avg & MMLU \\
\midrule
FP16 & 16.00 & 8.644 & 31.31 & 65.53 & 74.54 & 60.54 & 47.73 & 55.93 & 32.04 \\ \hline
AF & 3.25 & 19.750 & 25.85 & 50.46 & 66.10 & 54.46 & 37.64 & 46.90 & 25.46 \\
NF & 3.25 & 17.761 & 24.15 & 49.83 & 66.70 & 53.99 & 38.57 & 46.65 & 26.70 \\
HQQ & 3.25 & 17.965 & 26.28 & 52.90 & 68.01 & 55.56 & 38.76 & 48.30 & 25.52 \\
HIGGS (p=2) & 3.25 & 13.196 & 26.96 & 56.10 & 69.97 & 54.06 & 40.79 & 49.58 & 24.62 \\
HIGGS (p=3) & 3.25 & 12.423 & 29.10 & 56.78 & 69.37 & 58.33 & 41.00 & 50.91 & 25.87 \\
HIGGS (p=4) & 3.25 & \bf{12.185} & 28.41 & 57.53 & 69.80 & 57.85 & 41.86 & \bf{51.09} & \bf{28.01} \\ \hline
HIGGS (dyn data-free) & 3.25 & 11.082 & 30.03 & 59.68 & 71.27 & 58.17 & 43.20 & 52.47 & 27.27 \\
HIGGS (dyn) & 3.25 & \bf{10.949} & 30.38 & 60.14 & 70.89 & 57.70 & 43.37 & \bf{52.49} & \bf{28.24} \\ \hline
AF & 4.02 & 10.183 & 30.29 & 61.95 & 73.39 & 58.56 & 44.59 & 53.76 & \bf{28.28} \\
NF & 4.02 & 10.703 & 28.07 & 60.90 & 73.12 & 57.77 & 44.47 & 52.87 & 25.53 \\
HQQ & 4.02 & 26.516 & 25.43 & 49.75 & 64.15 & 53.83 & 36.96 & 46.02 & 26.26 \\
HIGGS (p=1) & 4.02 & 10.167 & 31.83 & 62.58 & 72.96 & 58.88 & 45.19 & 54.29 & 26.37 \\
HIGGS (p=2) & 4.02 & 9.735 & 32.25 & 64.48 & 74.16 & 57.77 & 46.28 & \bf{54.99} & 28.12 \\
HIGGS (p=3) & 4.02 & \bf{9.641} & 29.18 & 61.36 & 72.63 & 59.59 & 45.45 & 53.64 & 26.95 \\ \hline
HIGGS (dyn data-free) & 4.00 & 9.520 & 31.40 & 63.34 & 74.27 & 60.22 & 45.79 & \bf{55.00} & 27.83 \\
HIGGS (dyn) & 4.00 & \bf{9.375} & 31.66 & 63.09 & 73.94 & 59.43 & 46.26 & 54.87 & \bf{27.87} \\ \hline
AF & 4.25 & 9.543 & 30.63 & 63.47 & 73.88 & 59.67 & 46.17 & 54.76 & 30.29 \\
NF & 4.25 & 9.575 & 30.89 & 62.37 & 74.21 & 60.54 & 45.58 & 54.72 & 28.83 \\
HQQ & 4.25 & 9.646 & 32.25 & 62.42 & 73.56 & 59.83 & 46.20 & \bf{54.85} & 29.52 \\
HIGGS (p=1) & 4.26 & 9.600 & 30.29 & 62.12 & 72.69 & 59.83 & 46.10 & 54.20 & 28.36 \\
HIGGS (p=2) & 4.26 & 9.336 & 30.89 & 64.18 & 73.56 & 59.35 & 46.18 & 54.83 & 28.79 \\
HIGGS (p=3) & 4.25 & \bf{9.299} & 30.29 & 63.43 & 73.29 & 60.77 & 46.32 & 54.82 & \bf{30.91} \\ \hline
HIGGS (dyn data-free) & 4.25 & 9.341 & 31.91 & 63.51 & 74.37 & 59.91 & 46.07 & 55.15 & \bf{29.44} \\
HIGGS (dyn) & 4.25 & \bf{9.205} & 31.57 & 63.59 & 74.27 & 59.91 & 46.68 & \bf{55.20} & 28.29 \\
\bottomrule
\end{tabular}
\caption{Quantization evaluations for Llama3.2 1b}\label{tab:llama3.2-1b}
\end{table}

\begin{table}
\begin{tabular}{l|l|l|llllll|l}
\toprule
Method & wbits & Wiki2 & ArcC & ArcE & PiQA & Wino & HellaS & Avg & MMLU \\
\midrule
FP16 & 16.00 & 6.979 & 42.15 & 74.45 & 76.71 & 69.93 & 55.29 & 63.71 & 56.07 \\ \hline
AF & 3.25 & 10.81 & 33.11 & 65.61 & 71.65 & 64.40 & 47.41 & 56.44 & 42.00 \\
NF & 3.25 & 10.04 & 37.29 & 70.37 & 73.94 & 65.35 & 49.45 & 59.28 & 44.71 \\
HQQ & 3.25 & 9.252 & 36.35 & 68.14 & 74.21 & 64.33 & 50.01 & 58.61 & 44.42 \\
HIGGS (p=2) & 3.25 & 9.156 & 35.75 & 67.89 & 74.32 & 64.96 & 50.16 & 58.61 & 46.76 \\
HIGGS (p=3) & 3.25 & 8.711 & 37.37 & 71.59 & 74.37 & 66.30 & 50.62 & 60.05 & 46.82 \\
HIGGS (p=4) & 3.25 & \bf{8.669} & 37.88 & 70.58 & 73.18 & 67.32 & 51.40 & \bf{60.07} & \bf{48.94} \\ \hline
HIGGS (dyn data-free) & 3.25 & 8.007 & 39.25 & 71.72 & 74.97 & 67.64 & 51.76 & \bf{61.07} & \bf{51.09} \\
HIGGS (dyn) & 3.25 & \bf{7.979} & 38.40 & 71.97 & 75.14 & 67.25 & 52.42 & 61.03 & 50.59 \\ \hline
AF & 4.02 & 7.697 & 41.13 & 73.36 & 76.39 & 68.67 & 53.81 & 62.67 & 53.08 \\
NF & 4.02 & 7.819 & 39.68 & 73.57 & 76.01 & 67.56 & 53.17 & 62.00 & 51.13 \\
HQQ & 4.02 & 12.57 & 34.73 & 68.77 & 72.74 & 64.09 & 47.66 & 57.60 & 36.89 \\
HIGGS (p=1) & 4.02 & 7.695 & 41.30 & 73.99 & 75.95 & 67.80 & 53.35 & 62.48 & 53.01 \\
HIGGS (p=2) & 4.02 & 7.507 & 41.64 & 73.86 & 76.33 & 68.27 & 54.11 & \bf{62.84} & 53.82 \\
HIGGS (p=3) & 4.02 & \bf{7.464} & 40.44 & 71.76 & 76.66 & 68.82 & 53.87 & 62.31 & \bf{54.20} \\ \hline
HIGGS (dyn data-free) & 4.00 & 7.399 & 39.76 & 73.19 & 76.22 & 67.96 & 53.70 & 62.17 & 54.37 \\
HIGGS (dyn) & 4.00 & \bf{7.295} & 41.04 & 73.27 & 76.44 & 68.51 & 53.75 & \bf{62.60} & \bf{54.43} \\ \hline
AF & 4.25 & 7.365 & 40.61 & 73.15 & 76.88 & 68.27 & 54.43 & 62.67 & 54.53 \\
NF & 4.25 & 7.395 & 41.98 & 73.86 & 76.71 & 68.75 & 54.35 & \bf{63.13} & 54.51 \\
HQQ & 4.25 & 7.351 & 42.75 & 72.77 & 76.93 & 68.82 & 53.89 & 63.03 & 53.33 \\
HIGGS (p=1) & 4.26 & 7.459 & 40.19 & 72.31 & 76.12 & 67.96 & 53.97 & 62.11 & 53.43 \\
HIGGS (p=2) & 4.26 & 7.339 & 38.23 & 69.82 & 76.22 & 68.59 & 54.23 & 61.42 & 54.44 \\
HIGGS (p=3) & 4.25 & \bf{7.306} & 40.53 & 73.65 & 76.39 & 69.06 & 53.95 & 62.72 & \bf{54.54} \\ \hline
HIGGS (dyn data-free) & 4.25 & 7.266 & 40.02 & 73.44 & 76.01 & 69.53 & 54.01 & 62.60 & \bf{54.66} \\
HIGGS (dyn) & 4.25 & \bf{7.216} & 41.21 & 73.40 & 76.17 & 69.06 & 54.22 & \bf{62.81} & 54.63 \\
\bottomrule
\end{tabular}
\caption{Quantization evaluations for Llama3.2 3b}\label{tab:llama3.2-3b}
\end{table}

\begin{table}

\begin{tabular}{l|l|l|llllll|l}
\toprule
Method & wbits & Wiki2 & ArcC & ArcE & PiQA & Wino & HellaS & Avg & MMLU \\
\midrule
FP16 & 16.00 & 6.497 & 51.71 & 81.78 & 79.92 & 73.80 & 59.12 & 69.26 & 68.20 \\ \hline
AF & 3.25 & 8.848 & 43.69 & 74.92 & 77.75 & 69.61 & 53.52 & 63.90 & 57.51 \\
NF & 3.25 & 8.663 & 43.00 & 75.51 & 77.48 & 71.11 & 54.73 & 64.37 & 58.42 \\
HQQ & 3.25 & 8.273 & 46.59 & 77.23 & 77.97 & 71.19 & 55.70 & 65.73 & 59.41 \\
HIGGS (p=2) & 3.25 & 7.765 & 44.80 & 75.97 & 77.53 & 72.69 & 56.58 & 65.51 & 61.32 \\
HIGGS (p=3) & 3.25 & 7.659 & 49.74 & 80.35 & 77.64 & 71.67 & 56.59 & 67.20 & 62.26 \\
HIGGS (p=4) & 3.25 & \bf{7.469} & 48.72 & 78.66 & 79.65 & 72.77 & 56.31 & \bf{67.22} & \bf{63.83} \\ \hline
HIGGS (dyn data-free) & 3.25 & 7.351 & 48.63 & 78.49 & 78.51 & 70.48 & 56.76 & 66.58 & 63.79 \\
HIGGS (dyn) & 3.25 & \bf{7.204} & 47.78 & 79.12 & 78.89 & 72.53 & 57.27 & \bf{67.12} & \bf{64.36} \\ \hline
AF & 4.02 & 7.107 & 50.51 & 79.84 & 79.43 & 73.64 & 57.69 & 68.22 & 65.28 \\
NF & 4.02 & 7.084 & 49.74 & 80.35 & 79.43 & 73.95 & 58.14 & 68.32 & 65.59 \\
HQQ & 4.02 & 9.393 & 48.29 & 77.48 & 77.04 & 71.59 & 55.50 & 65.98 & 60.21 \\
HIGGS (p=1) & 4.02 & 7.013 & 49.15 & 80.18 & 79.65 & 72.53 & 57.70 & 67.84 & 65.64 \\
HIGGS (p=2) & 4.02 & 6.901 & 49.15 & 81.27 & 79.87 & 73.24 & 58.20 & 68.35 & \bf{65.98} \\
HIGGS (p=3) & 4.02 & \bf{6.833} & 50.17 & 81.82 & 79.98 & 72.77 & 58.59 & \bf{68.67} & 65.77 \\ \hline
HIGGS (dyn data-free) & 4.00 & 6.835 & 50.68 & 80.60 & 79.60 & 73.40 & 58.17 & 68.49 & 66.05 \\
HIGGS (dyn) & 4.00 & \bf{6.720} & 50.68 & 81.48 & 79.87 & 73.40 & 58.55 & \bf{68.80} & \bf{66.41} \\ \hline
AF & 4.25 & 6.758 & 51.96 & 80.68 & 79.16 & 72.85 & 58.57 & 68.65 & 66.41 \\
NF & 4.25 & 6.882 & 51.62 & 80.98 & 79.27 & 73.72 & 58.41 & 68.80 & \bf{67.21} \\
HQQ & 4.25 & 6.777 & 50.51 & 81.02 & 79.33 & 73.80 & 58.37 & 68.61 & 66.44 \\
HIGGS (p=1) & 4.26 & 6.838 & 50.51 & 80.35 & 79.71 & 72.85 & 58.47 & 68.38 & 66.40 \\
HIGGS (p=2) & 4.26 & \bf{6.736} & 53.75 & 81.82 & 80.09 & 74.03 & 58.24 & \bf{69.59} & 67.14 \\
HIGGS (p=3) & 4.25 & 6.741 & 51.37 & 81.10 & 79.71 & 73.32 & 58.79 & 68.86 & 66.97 \\ \hline
HIGGS (dyn data-free) & 4.25 & 6.783 & 50.94 & 81.31 & 80.09 & 74.03 & 58.33 & 68.94 & \bf{66.86} \\
HIGGS (dyn) & 4.25 & \bf{6.664} & 51.28 & 81.61 & 79.92 & 74.59 & 58.81 & \bf{69.24} & 66.32 \\
\bottomrule
\end{tabular}

\caption{Quantization evaluations for Llama3.1 8b Instruct}\label{tab:llama3.1-8b-Instruct}
\end{table}

\begin{table}
\begin{tabular}{l|l|l|llllll|l}
\toprule
Method & wbits & Wiki2 & ArcC & ArcE & PiQA & Wino & HellaS & Avg & MMLU \\
\midrule
FP16 & 16.00 & 2.541 & 60.67 & 87.25 & 83.13 & 79.64 & 66.48 & 75.43 & 78.51 \\ \hline
AF & 3.25 & 102350 & 51.54 & 81.52 & 80.85 & 75.14 & 62.54 & 70.32 & 60.53 \\
NF & 3.25 & 41.76 & 54.78 & 81.57 & 80.85 & 76.40 & 63.22 & 71.36 & 63.42 \\
HQQ & 3.25 & 4.057 & 57.17 & 84.30 & 81.23 & 77.82 & 64.21 & \bf{72.95} & 75.32 \\
HIGGS (p=2) & 3.25 & 4.297 & 54.44 & 84.09 & 81.34 & 70.88 & 63.76 & 70.90 & 73.44 \\
HIGGS (p=3) & 3.25 & 4.023 & 57.08 & 84.55 & 81.56 & 67.48 & 64.14 & 70.96 & 75.10 \\
HIGGS (p=4) & 3.25 & \bf{3.792} & 55.72 & 83.96 & 82.15 & 72.06 & 65.34 & 71.85 & \bf{75.77} \\ \hline
HIGGS (dyn data-free) & 3.25 & 3.675 & 58.45 & 85.48 & 81.61 & 79.40 & 64.39 & 73.87 & 76.36 \\
HIGGS (dyn) & 3.25 & \bf{3.466} & 57.68 & 85.52 & 82.15 & 78.93 & 65.23 & \bf{73.90} & \bf{76.98} \\ \hline
AF & 4.02 & 513.5 & 21.16 & 25.46 & 53.16 & 59.83 & 52.34 & 42.39 & 25.71 \\
NF & 4.02 & 2084 & 20.73 & 25.51 & 52.67 & 52.17 & 27.34 & 35.68 & 23.47 \\
HQQ & 4.02 & 4.023 & 55.63 & 83.21 & 80.85 & 74.51 & 62.21 & 71.28 & 74.67 \\
HIGGS (p=1) & 4.02 & 3.115 & 58.70 & 85.35 & 82.54 & 77.82 & 65.83 & 74.05 & 77.27 \\
HIGGS (p=2) & 4.02 & 2.986 & 60.15 & 86.28 & 82.64 & 78.85 & 66.20 & \bf{74.83} & 77.59 \\
HIGGS (p=3) & 4.02 & \bf{2.956} & 59.64 & 86.07 & 82.97 & 77.90 & 65.98 & 74.51 & \bf{77.80} \\ \hline
HIGGS (dyn data-free) & 4.00 & 3.133 & 60.15 & 86.15 & 82.75 & 78.61 & 66.04 & 74.74 & 77.68 \\
HIGGS (dyn) & 4.00 & \bf{2.827} & 60.24 & 86.74 & 82.70 & 79.64 & 66.17 & \bf{75.10} & \bf{78.24} \\ \hline
AF & 4.25 & 2.982 & 58.96 & 86.32 & 83.03 & 80.27 & 65.89 & \bf{74.89} & 77.11 \\
NF & 4.25 & 3.065 & 57.25 & 86.28 & 82.70 & 78.14 & 66.04 & 74.08 & 78.12 \\
HQQ & 4.25 & 2.873 & 60.67 & 87.04 & 82.75 & 77.98 & 65.75 & 74.84 & \bf{78.47} \\
HIGGS (p=1) & 4.26 & 2.935 & 58.53 & 86.07 & 82.32 & 77.27 & 65.83 & 74.00 & 77.62 \\
HIGGS (p=2) & 4.26 & 2.852 & 59.30 & 86.62 & 82.21 & 78.85 & 66.12 & 74.62 & 78.00 \\
HIGGS (p=3) & 4.25 & \bf{2.834} & 60.49 & 86.36 & 82.59 & 78.53 & 66.02 & 74.80 & 77.70 \\ \hline
HIGGS (dyn data-free) & 4.25 & 2.956 & 60.67 & 86.74 & 82.81 & 78.77 & 66.05 & 75.01 & \bf{78.14} \\
HIGGS (dyn) & 4.25 & \bf{2.787} & 60.92 & 86.32 & 83.03 & 79.87 & 66.35 & \bf{75.30} & 78.09 \\
\bottomrule
\end{tabular}
\caption{Quantization evaluations for Llama3.1 70b. We don't quantize attention's value layer for the first transformer block of the model in non-dynamic setups.}\label{tab:llama3.1-70b}
\end{table}

\begin{table}

\begin{tabular}{l|l|l|llllll|l}
\toprule
Method & wbits & Wiki2 & ArcC & ArcE & PiQA & Wino & HellaS & Avg & MMLU \\
\midrule
FP16 & 16.00 & 6.128 & 47.70 & 80.51 & 78.67 & 72.93 & 60.02 & 67.97 & 74.13 \\ \hline
AF & 3.25 & 7.542 & 46.59 & 79.38 & 77.37 & 68.19 & 55.27 & 65.36 & 67.16 \\
NF & 3.25 & 7.588 & 44.28 & 76.94 & 77.42 & 67.72 & 55.34 & 64.34 & 68.27 \\
HQQ & 3.25 & 7.247 & 46.84 & 76.01 & 78.13 & 67.25 & 56.68 & 64.98 & 69.58 \\
HIGGS (p=2) & 3.25 & 6.808 & 46.59 & 77.78 & 78.78 & 71.19 & 57.97 & 66.46 & 70.90 \\
HIGGS (p=3) & 3.25 & 6.732 & 46.25 & 77.78 & 78.73 & 70.72 & 57.97 & 66.29 & 71.41 \\
HIGGS (p=4) & 3.25 & \bf{6.669} & 48.04 & 79.55 & 78.07 & 71.59 & 58.08 & \bf{67.06} & \bf{71.73} \\ \hline
HIGGS (dyn data-free) & 3.25 & 6.604 & 47.61 & 79.17 & 78.40 & 72.30 & 57.56 & 67.01 & 71.26 \\
HIGGS (dyn) & 3.25 & \bf{6.566} & 48.21 & 79.34 & 78.62 & 71.27 & 57.64 & \bf{67.01} & \bf{71.41} \\ \hline
AF & 4.02 & 6.538 & 45.31 & 77.86 & 77.91 & 71.82 & 58.28 & 66.24 & 72.23 \\
NF & 4.02 & 6.538 & 47.61 & 79.59 & 78.35 & 69.85 & 58.98 & 66.88 & 72.35 \\
HQQ & 4.02 & 7.767 & 44.88 & 76.22 & 77.26 & 68.98 & 55.33 & 64.53 & 70.72 \\
HIGGS (p=1) & 4.02 & 6.407 & 47.35 & 79.71 & 78.51 & 70.40 & 58.60 & 66.92 & 72.78 \\
HIGGS (p=2) & 4.02 & 6.359 & 48.81 & 80.30 & 79.00 & 72.22 & 58.68 & \bf{67.80} & 72.97 \\
HIGGS (p=3) & 4.02 & \bf{6.325} & 47.27 & 79.97 & 78.67 & 72.14 & 59.10 & 67.43 & \bf{73.39} \\ \hline
HIGGS (dyn data-free) & 4.00 & 6.334 & 48.89 & 79.67 & 78.78 & 70.72 & 58.87 & \bf{67.39} & 72.58 \\
HIGGS (dyn) & 4.00 & \bf{6.291} & 47.70 & 78.75 & 78.84 & 71.67 & 59.04 & 67.20 & \bf{73.10} \\ \hline
AF & 4.25 & 6.352 & 48.29 & 79.46 & 78.18 & 71.27 & 58.83 & 67.21 & 73.29 \\
NF & 4.25 & 6.340 & 46.67 & 79.63 & 78.35 & 71.19 & 59.00 & 66.97 & 73.44 \\
HQQ & 4.25 & 6.358 & 46.93 & 79.97 & 78.89 & 71.11 & 59.38 & 67.26 & 73.32 \\
HIGGS (p=1) & 4.26 & 6.343 & 48.81 & 79.08 & 78.56 & 73.01 & 58.64 & 67.62 & 73.67 \\
HIGGS (p=2) & 4.26 & 6.298 & 46.76 & 79.17 & 78.84 & 73.16 & 59.05 & 67.40 & 73.30 \\
HIGGS (p=3) & 4.25 & \bf{6.278} & 47.70 & 79.92 & 78.13 & 73.16 & 59.20 & \bf{67.62} & \bf{73.86} \\ \hline
HIGGS (dyn data-free) & 4.25 & 6.282 & 48.21 & 79.71 & 78.78 & 70.56 & 58.85 & 67.22 & \bf{73.32} \\
HIGGS (dyn) & 4.25 & \bf{6.255} & 48.72 & 80.22 & 78.84 & 70.88 & 59.01 & \bf{67.53} & 73.28 \\
\bottomrule
\end{tabular}

\caption{Quantization evaluations for Qwen2.5 7B}\label{tab:qwen2.5-7b}
\end{table}

\end{document}